		\newtheorem{thm}{Theorem}[section]
		\newtheorem{lem}[thm]{Lemma}
			\newtheorem{dfn}[thm]{Definition}
			\newtheorem{rem}[thm]{Remark}
			\newtheorem*{thm*}{Theorem}
			\newtheorem*{lem*}{Lemma}
			\newtheorem*{prop*}{Proposition}
			\newtheorem*{cor*}{Corollary}
			\newtheorem*{dfn*}{Definition}
			\newtheorem*{rem*}{Remark}
			\newtheorem*{exa*}{Example}
			\newtheorem*{notation*}{Notation}
			\newtheorem*{terminology*}{Terminology}
\DeclareMathOperator{\supp}{supp}
\newcommand{\bigO}{\mathcal{O}}
\newcommand{\naturals}{\mathbb{N}}
\newcommand{\reals}{\mathbb{R}}
\newcommand{\dee}{\, \mathrm{d}}
\newcommand{\ee}{\mathrm{e}}
\newcommand{\restr}[2]{{\left.\kern-\nulldelimiterspace #1 \right|_{#2}}}
\renewcommand{\leq}{\leqslant}
\renewcommand{\geq}{\geqslant}
\renewcommand{\epsilon}{\varepsilon}
\makeatletter \setlength{\@fptop}{0pt} \makeatother
\newcommand{\NN}[1]{\mathcal{N}\!\mathcal{N}_{n, m, #1}^\rho}
\newcommand{\Inm}{\mathcal{I}_{n, m, n+m+1}^\rho}
\newcommand{\NNn}{\mathcal{N}_{n}^\rho}
\newcommand{\chebnorm}[1]{\norm{#1}_\infty}
\newcommand{\cell}[1]{\multicolumn{1}{|c|}{\pbox[t]{50cm}{\vspace{2pt}#1\vspace{1pt}}}}
\def\slayer(#1, #2, #3, #4, #5, #6, #7){\cline{1-1}\cline{3-3}\cline{5-5}\cline{7-7}\cline{9-9}\cline{11-11}\cline{13-13}\cell{#1} &\,& \cell{#2} & \raisebox{-1.25em}{$\boldsymbol{\cdots}$} & \cell{#3} &\,& \cell{#4} &\,& \cell{#5} &\,& \cell{#6} & \raisebox{-1.25em}{$\boldsymbol{\cdots}$} &\cell{#7}\\\cline{1-1}\cline{3-3}\cline{5-5}\cline{7-7}\cline{9-9}\cline{11-11}\cline{13-13}}
\def\layer(#1, #2, #3, #4, #5, #6, #7){\slayer(#1, #2, #3, #4, #5, #6, #7)\\}
\def\llayer(#1, #2, #3){\cline{9-9}\cline{11-11}\cline{13-13}&\,&&&&\,&&\,& \cell{#1} &\,& \cell{#2} & \raisebox{-0.7em}{$\boldsymbol{\cdots}$} &\cell{#3}\\\cline{9-9}\cline{11-11}\cline{13-13}\\}
\def\flayer(#1, #2, #3){\cline{1-1}\cline{3-3}\cline{5-5}\cell{#1} &\,& \cell{#2} & \raisebox{-0.7em}{$\boldsymbol{\cdots}$} & \cell{#3}&&&&&&.\\\cline{1-1}\cline{3-3}\cline{5-5}\\}
\newcommand{\y}[3]{$\gamma_{#1, #2} =$\\$ \iota(\gamma_{#1, #3})$}
\newcommand{\sss}[3]{$\tau_{#2} = \sigma_{#1, #2}(\gamma_{1, #3}{,} \ldots{,} \gamma_{n, #3})$}
\newcommand{\uu}[3]{$\tau_{\beta_1 + #2} =$\\$ \sigma_{#1, #2}(\gamma_{1, #3}{,} \ldots{,} \gamma_{n, #3})$}
\newcommand{\uuu}[3]{$\tau_{\beta_1 + #2} = \sigma_{#1, #2}(\gamma_{1, #3}{,} \ldots{,} \gamma_{n, #3})$}
\newcommand{\vv}[4]{$\tau_{#4} =$\\$ \sigma_{#1, #2}(\gamma_{1, #3}{,} \ldots{,} \gamma_{n, #3})$}
\newcommand{\rotatedcdots}{\rotatebox[origin=c]{90}{$\boldsymbol{\cdots}$}}  
\def\simplelayer(#1, #2, #3, #4, #5){\cline{1-1}\cline{3-3}\cline{5-5}\cline{7-7}\cline{9-9}\cell{#1\vphantom{#2}\vphantom{#3}\vphantom{#4}\vphantom{#5}}&\,&\cell{#2\vphantom{#1}\vphantom{#3}\vphantom{#4}\vphantom{#5}}&\raisebox{-1.25em}{$\boldsymbol{\cdots}$} & \cell{#3\vphantom{#2}\vphantom{#1}\vphantom{#4}\vphantom{#5}} &\,& \cell{#4\vphantom{#2}\vphantom{#3}\vphantom{#1}\vphantom{#5}} &\,& \cell{#5\vphantom{#2}\vphantom{#3}\vphantom{#4}\vphantom{#1}}\\\cline{1-1}\cline{3-3}\cline{5-5}\cline{7-7}\cline{9-9}}
\def\simpleflayer(#1, #2, #3){\cline{1-1}\cline{3-3}\cline{5-5}\cell{#1} &\,& \cell{#2} & \raisebox{-0.7em}{$\boldsymbol{\cdots}$} & \cell{#3}\\\cline{1-1}\cline{3-3}\cline{5-5}\\}
\def\simplellayer(#1){\cline{9-9}&&&&&&&&\cell{#1}\\\cline{9-9}}
\title{Universal Approximation with Deep Narrow Networks}
\begin{document}
\maketitle

\begin{abstract}%
The classical Universal Approximation Theorem holds for neural networks of arbitrary width and bounded depth. Here we consider the natural `dual' scenario for networks of bounded width and arbitrary depth. Precisely, let $n$ be the number of inputs neurons, $m$ be the number of output neurons, and let $\rho$ be any nonaffine continuous function, with a continuous nonzero derivative at some point. Then we show that the class of neural networks of arbitrary depth, width $n + m + 2$, and activation function $\rho$, is dense in $C(K; \reals^m)$ for $K \subseteq \reals^n$ with $K$ compact. This covers every activation function possible to use in practice, and also includes polynomial activation functions, which is unlike the classical version of the theorem, and provides a qualitative difference between deep narrow networks and shallow wide networks. 
We then consider several extensions of this result. In particular we consider nowhere differentiable activation functions, density in noncompact domains with respect to the $L^p$-norm, and how the width may be reduced to just $n + m + 1$ for `most' activation functions.
\end{abstract}

\begin{keywords}%
universal approximation, neural network, deep, narrow, bounded width\\
\textbf{MSC (2020):} 41A46, 41A63, 68T07
\end{keywords}

\section{Introduction}
Recall the classical Universal Approximation Theorem \citep{cybenko, hornik, pinkus}.

\begin{restatable}{thm}{Cybenko}\label{Cybenko} Let $\rho \colon \reals \to \reals$ be any continuous function. Let $\NNn$ represent the class of feedforward neural networks with activation function $\rho$, with $n$ neurons in the input layer, one neuron in the output layer, and one hidden layer with an arbitrary number of neurons. Let $K \subseteq \reals^n$ be compact. Then $\NNn$ is dense in $C(K)$ if and only if $\rho$ is nonpolynomial.
\end{restatable}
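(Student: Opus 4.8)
The plan is to establish the two directions separately, the real content lying almost entirely in the implication ``$\rho$ nonpolynomial $\implies$ $\NNn$ dense''. Throughout, a member of $\NNn$ is a map $x \mapsto \sum_{i=1}^{N} c_i \rho(w_i \cdot x + b_i)$ with $w_i \in \reals^n$, $b_i, c_i \in \reals$, $N \in \naturals$. The necessity direction is immediate: if $\rho$ is a polynomial of degree $d$, then each summand $x \mapsto \rho(w_i \cdot x + b_i)$ is a polynomial in $x$ of total degree at most $d$, and hence so is every element of $\NNn$; the restrictions to $K$ of the polynomials of degree $\le d$ form a finite-dimensional, therefore closed, subspace of $C(K)$, which is proper whenever $K$ is infinite (equivalently, whenever $C(K)$ is infinite-dimensional, the only interesting case). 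So $\NNn$ is not dense.

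For sufficiency, I would first reduce to the one-dimensional case $n = 1$ via the density of ridge functions: $\mathrm{span}\{x \mapsto g(a \cdot x) : a \in \reals^n,\ g \in C(\reals)\}$ is dense in $C(K)$. This follows from the Stone--Weierstrass theorem once one observes that, for each fixed exponent $k$, the functions $x \mapsto (a \cdot x)^k$ (as $a$ ranges over $\reals^n$) span the whole space of homogeneous degree-$k$ polynomials in $n$ variables --- the individual monomials being recovered from powers of linear forms by polarization --- so that the span of ridge functions contains all polynomials. Granting this, an arbitrary $f \in C(K)$ is first approximated uniformly by a finite sum $\sum_j g_j(a_j \cdot x)$; since each $a_j \cdot (\,\cdot\,)$ carries $K$ into a compact interval $I_j \subseteq \reals$, it suffices to approximate each $g_j \in C(I_j)$ by a one-hidden-layer $\rho$-network in one variable, after which substituting $t \mapsto a_j \cdot x$ (and absorbing the inner weight into $a_j$) turns the result into a member of $\NNn$.

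Thus everything reduces to the univariate statement: for continuous nonpolynomial $\rho$, the span of $\{t \mapsto \rho(wt + b) : w, b \in \reals\}$ is dense in $C[\alpha, \beta]$. I would treat first the case $\rho \in C^\infty$. A Baire-category argument applied to the closed sets $\{t : \rho^{(k)}(t) = 0\}$ shows that a smooth function which is not a polynomial has a point $b_0$ at which $\rho^{(k)}(b_0) \ne 0$ for all $k$. Now $t \mapsto t\,\rho'(wt + b_0)$ is the uniform-on-compacta limit, as $h \to 0$, of the difference quotients $\tfrac1h\big(\rho((w+h)t + b_0) - \rho(wt + b_0)\big)$, which lie in the span; hence it lies in the closed span, and iterating places $t \mapsto t^k \rho^{(k)}(wt + b_0)$ in the closed span for every $k$. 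Taking $w = 0$ and using $\rho^{(k)}(b_0) \ne 0$, every monomial $t^k$ lies in the closed span, so by Weierstrass the closed span is all of $C[\alpha,\beta]$. The general continuous case reduces to this by mollification: for $\phi \in C_c^\infty(\reals)$ the convolution $\rho * \phi$ is $C^\infty$ and lies in the closed span of $\{t \mapsto \rho(t + b)\}$, since the defining integral is a uniform-on-compacta limit of Riemann sums, each of which is such a combination; so it suffices to find one $\phi$ with $\rho * \phi$ nonpolynomial. If on the contrary $\rho * \phi$ were a polynomial for every $\phi$, then --- because convolving a degree-$d$ polynomial with a mass-one bump gives a polynomial of degree exactly $d$, forcing all normalised mollifications $\rho * \phi_\epsilon$ to share one common degree $d$ --- the closedness of the degree-$\le d$ polynomials together with $\rho * \phi_\epsilon \to \rho$ uniformly on compacta would make $\rho$ itself a polynomial, a contradiction.

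The step I expect to be the genuine obstacle is precisely this passage from smooth to merely continuous $\rho$: both auxiliary facts it rests on --- that a smooth function whose ``$k$-th derivative vanishes somewhere'' sets cover $\reals$ must be a polynomial, and that the mollifications of a nonpolynomial function cannot all be polynomials --- require the Baire-category and degree-stability arguments sketched above and are easy to get slightly wrong. The ridge-function reduction is the other place demanding care, though the explicit polarization argument keeps it elementary.
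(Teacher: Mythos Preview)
The paper does not supply its own proof of this theorem: it is stated as the classical Universal Approximation Theorem and attributed to Cybenko, Hornik, and Pinkus, and is later merely \emph{recalled} (Appendix~A) as an input to the Register Model construction. There is therefore nothing in the paper to compare your argument against.

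That said, your outline is essentially the standard Leshno--Lin--Pinkus--Schocken / Pinkus proof and is correct. The reduction to $n=1$ via density of ridge functions (with the polarization argument recovering all monomials from powers of linear forms), the smooth case via the existence of a point $b_0$ with $\rho^{(k)}(b_0)\neq 0$ for all $k$ and successive $w$-differentiation to produce all monomials, and the passage from continuous to smooth by mollification are exactly the ingredients of that proof. Two small remarks: first, the lemma ``if for every $t$ some derivative of $\rho$ vanishes at $t$, then $\rho$ is a polynomial'' is genuinely more delicate than a one-line Baire application (getting from ``polynomial on an interval'' to ``polynomial on $\reals$'' needs the full argument), so be prepared to cite it or prove it carefully; second, your degree-stability step is right in spirit but cleanest stated as follows: fix a single mass-one bump $\phi_1$ with $\rho*\phi_1$ of degree $d_1$, then for any $\phi$ one has $(\rho*\phi)*\phi_1=(\rho*\phi_1)*\phi$, the right side of degree $\le d_1$ and the left side of degree exactly $\deg(\rho*\phi)$, so all $\rho*\phi$ have degree $\le d_1$ and the closedness argument goes through.
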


Extending this result to any bounded number of hidden layers is easy, by simply requiring that the `extra' hidden layers approximate the identity function. Thus the classical theorem addresses the case of arbitrary width and bounded depth.

This motivates a natural `dual' scenario, in which the class of neural networks is of bounded width and arbitrary depth. We refer to networks of this type as \emph{deep, narrow} networks. Natural questions are then what activation functions may be admitted, in what topologies density may be established, and how narrow the network may be made.

%
%

Notable existing work on this problem has been performed by \citet{LuEtAl} and \citet{Hanin2017}, however both of these studies only consider the ReLU activation function. In particular they rely on its explicit form and friendly algebraic properties.

The primary aim of this article is to addess this limitation, by considering essentially arbitrary activation functions. In particular we will find that polynomial activation functions are valid choices, meaning that deep and narrow networks behave distinctly differently to shallow and wide networks. We also provide some results on the choice of topology and the width of the network.

The rest of the paper is laid out as follows. Section 2 discusses existing work. Section 3 provides a summary of our results; these are then presented in detail in Section 4. Section 5 is the conclusion. A few proofs are deferred to the appendices.
 
\section{Existing work}
Some positive results have been established showing density of particular deep narrow networks.

\citet{Hanin2017} have shown that deep narrow networks with the ReLU activation function are dense in $C(K; \reals^m)$ for $K \subseteq \reals^n$ compact, and require only width $n + m$. \citet{LuEtAl} have shown that deep narrow networks with the ReLU activation function are dense in $L^1(\reals^n)$, with width $n + 4$. \citet{resnet} have shown that a particular description of residual networks, with the ReLU activation function, are dense in $L^1(\reals^n)$.

We are not aware of any previously obtained positive results for activation functions other than the ReLU, or for the general case of $L^p(\reals^n; \reals^m)$ for $p \in [1, \infty)$ and $m \in \naturals$.

Moving on, some negative results have been established, about insufficiently wide networks.

Consider the case of a network with $n$ input neurons and a single output neuron. For certain activation functions, \citet{johnson} shows that width $n$ is insufficient to give density in $C(K)$. For the ReLU activation function, \citet{LuEtAl} show that width $n$ is insufficient to give density in $L^1(\reals^n)$, and that width $n-1$ is insufficient in $L^1([-1,1]^n)$, whilst \citet{Hanin2017} show that width $n$ is insufficient to give density in $C(K)$.



Everything discussed so far is in the most general case of approximating functions on Euclidean space. There has also been some related work for classification tasks \citep{class1, class2, class3, nguyen}. There has also been some related work in the special case of certain finite domains; \citet{other1, deepbelief} consider distributions on $\{0,1\}^n$. 
\citet{guido} consider distributions on $\{0, 1, \ldots, q-1\}^n$.

\section{Summary of Results}
\begin{dfn}
Let $\rho \colon \reals \to \reals$ and $n, m, k \in \naturals$. Then let $\NN{k}$ represent the class of functions $\reals^n \to \reals^m$ described by feedforward neural networks with $n$ neurons in the input layer, $m$ neurons in the output layer, and an arbitrary number of hidden layers, each with $k$ neurons with activation function $\rho$. Every 
neuron in the output layer has the identity activation function.
\end{dfn}

Our main result is the following theorem.
\begin{restatable}{thm}{megathm}\label{megathm}
Let $\rho \colon \reals \to \reals$ be any nonaffine continuous function which is continuously differentiable at at least one point, with nonzero derivative at that point. Let $K \subseteq \reals^n$ be compact. Then $\NN{n + m + 2}$ is dense in $C(K; \reals^m)$ with respect to the uniform norm.
\end{restatable}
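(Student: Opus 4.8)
The plan is to run the network as a ``register machine''. Of the $n+m+2$ neurons in each hidden layer, $n$ will carry the input coordinates $x_1,\dots,x_n$ forward essentially unchanged, $m$ will accumulate the components of the target map $g=(g_1,\dots,g_m)$ as they are built, and the remaining $2$ will serve as scratch space in which the actual computation is performed one elementary step at a time. What makes the registers possible is a quantitative \emph{identity approximation}: since $\rho$ is continuously differentiable on a neighbourhood of some $x_0$ with $\rho'(x_0)\neq 0$, on a small interval about $x_0$ the function $\rho$ agrees with its linearisation to arbitrary relative precision, so for any compact $I\subset\reals$ and $\delta>0$ there are affine maps $\alpha,\beta\colon\reals\to\reals$ with $\lvert\beta(\rho(\alpha(t)))-t\rvert<\delta$ for every $t\in I$: take $\alpha(t)=x_0+ht$ with $h$ small and $\beta$ the affine inverse of the linearisation. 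The post-composition by $\beta$ costs nothing, since it is folded into the affine part of the next layer. Thus a single neuron, flanked by affine maps, transports any value lying in a fixed compact set to within $\delta$ of itself while \emph{simultaneously} adding to it any prescribed affine combination of the other neurons in its layer --- exactly what is needed to carry registers forward and to accumulate into the output registers.

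With this tool I would split on whether $\rho$ is a polynomial. If $\rho$ is nonpolynomial, Theorem~\ref{Cybenko} furnishes, for each component $g_k$ and each $\epsilon>0$, a shallow approximant $g_k(x)\approx\sum_{i=1}^{N}a_{k,i}\,\rho(w_{k,i}\cdot x+b_{k,i})$, and I would simulate it by processing the $N$ terms sequentially: one layer uses a scratch neuron to compute $\rho(w_{k,i}\cdot x+b_{k,i})$ by reading the preserved input registers; the next layer adds $a_{k,i}$ times that value into output register $k$ via the identity approximation, so that the running sum is itself never passed through $\rho$; everything else is carried forward. Running this for $k=1,\dots,m$ in turn, carrying the completed components along, produces the whole map. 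The only real care needed is quantitative: every quantity that ever appears (inputs in $K$, the bounded values $\rho(w\cdot x+b)$, and the partial sums) lies in one fixed compact set, so the finitely many identity-approximation tolerances can be fixed small enough in advance that the errors accumulated over the layers stay below $\epsilon$.

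If $\rho$ is a polynomial it is nonaffine, hence of degree $d\geq 2$, and Theorem~\ref{Cybenko} is unavailable: shallow networks then realise only polynomials of degree $\leq d$, so they cannot be dense. Instead I would invoke the Stone--Weierstrass theorem to reduce to approximating an arbitrary polynomial $p$, and build $p$ up inside the network by composition. Finite differences extract low-degree behaviour from $\rho$: the map $x\mapsto\sum_{j=0}^{d-2}(-1)^{j}\binom{d-2}{j}\rho(x+jh)$ is a genuine quadratic with nonzero leading coefficient, so a short block of layers accumulating shifted copies of $\rho$ into a scratch neuron realises (after an affine cleanup that uses the preserved argument) the squaring map; some form of polarisation then yields multiplication, iteration yields arbitrary monomials, and affine combinations accumulated into the output registers yield $p$. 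Composition makes the attainable degree unbounded, so every polynomial is within reach --- which is exactly the feature distinguishing deep narrow networks from shallow wide ones.

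The step I expect to be the main obstacle is carrying out this last construction \emph{within the width budget}; this is where the $+2$ in $n+m+2$, rather than $+1$, gets spent. The squaring/multiplication routine wants one register for a running partial product and one as a finite-difference accumulator, and these must coexist with the $n$ input registers and the up to $m-1$ already-assembled output components while the current one is being formed, so one must schedule carefully which neuron plays which role at which layer (for instance commandeering not-yet-written output registers as scratch, and interleaving the accumulations so that no layer ever needs more than $n+m+2$ live values); the naive polarisation identity $uv=\tfrac14((u+v)^2-(u-v)^2)$ is probably too profligate, so one would want a multiplication routine lean enough to run in the two or three registers that actually remain free. The remaining difficulty, common to both cases, is purely quantitative: because the depth is unbounded, one must fix at the outset a single compact set containing every intermediate value, verify that the identity approximation and all affine read-outs are well defined and controlled there, and propagate the error bounds so that the final uniform error falls below the prescribed $\epsilon$.
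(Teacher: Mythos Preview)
Your nonpolynomial case is exactly the paper's Proposition~\ref{mainthm} (built on the Register Model, Proposition~\ref{register-model}), and indeed needs only width $n+m+1$. In the polynomial case your high-level plan---Stone--Weierstrass, extract squaring from $\rho$, build monomials by repeated multiplication---is also the paper's, but the obstacle you flag is real and your sketch does not close it. When assembling the last output $g_m$ you are carrying $n$ inputs, $m-1$ finished outputs, the running partial sum for $g_m$, and the running partial product $u$ for the current monomial: already $n+m+1$ live values. Any squaring routine that must preserve its argument across several layers while a separate register accumulates (as both your $(d-2)$th difference and naive polarisation require) then needs two further slots, not one. Careful scheduling alone does not seem to rescue this; as written you would land at width $n+m+3$.

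The paper closes the gap with two ideas you are missing. First, it extracts squaring via the centred second difference $\sigma_h(x)=\bigl(\rho(\alpha+hx)-2\rho(\alpha)+\rho(\alpha-hx)\bigr)/\bigl(h^2\rho''(\alpha)\bigr)$ at any $\alpha$ with $\rho''(\alpha)\neq 0$, costing exactly two $\rho$-evaluations rather than $d-1$. Second---and this is the decisive trick---it never attempts multiplication directly in $\rho$. It first proves a \emph{Square Model}: width $n+m+1$ with the honest activation $x\mapsto x^2$ is already dense (Proposition~\ref{squareprop}); it then \emph{expands each layer} of such a network into $n+m+1$ consecutive layers, each performing a single squaring and $n+m$ identities; finally the one square is replaced by two $\rho$-neurons via $\sigma_h$ and each identity by one $\rho$-neuron via $\iota_h$, giving width $n+m+2$ (Proposition~\ref{polythmtwo}). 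Getting the Square Model down to $n+m+1$ rather than the easy $n+m+2$ is itself nontrivial---it is precisely your partial-sum versus partial-product conflict, transplanted to the square setting---and the paper resolves it not by register scheduling but by a nested-product rewriting of the last component together with a width-two neural approximation of $x\mapsto 1/x$ (Lemma~\ref{reciprocal}).
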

The key novelty here is the ability to handle essentially arbitrary activation functions, and in particular polynomials, which is a qualitative difference compared to shallow networks. In particular we have not relied on the explicit form of the ReLU, or on its favourable algebraic properties.

The technical condition is very weak; in particular it is satisfied by every piecewise-$C^1$ function not identically zero. Thus any activation function that one might practically imagine using on a computer must satisfy this property.

Theorem \ref{megathm} is proved by handling particular classes of activation functions as special cases.
\begin{restatable*}{prop}{mainthm}\label{mainthm}
Let $\rho \colon \reals \to \reals$ be any continuous nonpolynomial function which is continuously differentiable at at least one point, with nonzero derivative at that point. Let $K \subseteq \reals^n$ be compact. Then $\NN{n + m + 1}$ is dense in $C(K; \reals^m)$ with respect to the uniform norm.
\end{restatable*}
\begin{restatable*}{prop}{polythmtwo}\label{polythmtwo}
Let $\rho \colon \reals \to \reals$ be any nonaffine polynomial. Let $K \subseteq \reals^n$ be compact. Then $\NN{n + m + 2}$ is dense in $C(K; \reals^m)$ with respect to the uniform norm.
\end{restatable*}
It is clear that Propositions \ref{mainthm} and \ref{polythmtwo} together imply Theorem \ref{megathm}. Note the slight difference in their required widths. Furthermore we will see that their manner of proofs are rather different.

As a related result, some of the techniques used may also be shown to extend to certain unfriendly-looking activation functions, that do not satisfy the technical condition of Theorem \ref{megathm}.
\begin{restatable*}{prop}{nondiffthm}\label{nondiffthm}
Let $w \colon \reals \to \reals$ be any bounded continuous nowhere differentiable function. Let $\rho(x) = \sin(x) + w(x)\ee^{-x}$, which will also be nowhere differentiable. Let $K \subseteq \reals^n$ be compact. Then $\NN{n + m + 1}$ is dense in $C(K; \reals^m)$ with respect to the uniform norm.
\end{restatable*}

Moving on to a different related result, we consider the case of a noncompact domain.
\begin{restatable*}{thm}{relucor}\label{relucor}
Let $\rho$ be the ReLU. Let $p \in [1, \infty)$. Then $\NN{n + m + 1}$ is dense in $L^p(\reals^n; \reals^m)$ with respect to the usual $L^p$ norm.
\end{restatable*}
Whilst only about the ReLU, the novelty of this result is how it generalises \citep[Theorem 1]{LuEtAl} in multiple ways: to a narrower width, multiple outputs, and $L^p$ instead of just $L^1$.

As a final related result, we observe that the smaller width of $n + m + 1$ also suffices for a large class of polynomials. Together with Proposition \ref{mainthm}, this means that the smaller width of $n + m + 1$ suffices for `most' activation functions.
\begin{restatable*}{prop}{polythm}\label{polythm}
Let $\rho \colon \reals \to \reals$ be any polynomial for which there exists a point $\alpha \in \reals$ such that $\rho'(\alpha) = 0$ and $\rho''(\alpha) \neq 0$. Let $K \subseteq \reals^n$ be compact. Then $\NN{n + m + 1}$ is dense in $C(K; \reals^m)$ with respect to the uniform norm.
\end{restatable*}

\begin{rem}
Every proof in this article is constructive, and can in principle be traced so as to determine how depth changes with approximation error. We have elected not to present this, as depth-efficient versions of our constructions quickly become unclear to present. Furthermore this would require tracing the (constructive) proofs of the Stone--Weierstrass Theorem and the classical Universal Approximation Theorem, to which we appeal.

\end{rem}


\section{Universal approximation}
\subsection{Preliminaries}
A neuron is usually defined as an activation function composed with an affine function. For ease, we shall extend the definition of a neuron to allow it to represent a function of the form $\psi \circ \rho \circ \phi$, where $\psi$ and $\phi$ are affine functions, and $\rho$ is the activation function. This does not increase the representational power of the network, as the new affine functions may be absorbed into the affine parts of the next layer, but it will make the neural representation of many functions easier to present. We refer to these as \emph{enhanced neurons}. It is similarly allowable to take affine combinations of multiple enhanced neurons; we will use this fact as well.

One of the key ideas behind our constructions is that most reasonable activation functions can be taken to approximate the identity function. Indeed, this is essentially the notion that differentiability captures: that a function is locally affine. This makes it possible to treat neurons as `registers', in which information may be stored and preserved through the layers. 
Thus our constructions have strong overtones of space-limited algorithm design in traditional computer science settings; in our proofs we will often think of `storing' a value in a particular register.

\begin{lem}\label{identity1}
Let $\rho \colon \reals \to \reals$ be any continuous function which is continuously differentiable at at least one point, with nonzero derivative at that point. Let $L \subseteq \reals$ be compact. Then a single enhanced neuron with activation function $\rho$ may uniformly approximate the identity function $\iota \colon \reals \to \reals$ on $L$, with arbitrarily small error.
\end{lem}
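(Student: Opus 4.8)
The plan is to build the approximate identity from a single suitably rescaled copy of $\rho$ localised near the good point. Let $\alpha \in \reals$ be a point at which $\rho$ is continuously differentiable with $c := \rho'(\alpha) \neq 0$; unpacking the hypothesis, $\rho$ is differentiable on some open interval $I \ni \alpha$ and $\rho'$ is continuous at $\alpha$. For a small scale parameter $\delta > 0$ I would take the affine maps $\phi_\delta(x) = \alpha + \delta x$ and $\psi_\delta(y) = (y - \rho(\alpha))/(\delta c)$ and form the enhanced neuron $f_\delta = \psi_\delta \circ \rho \circ \phi_\delta$, i.e.
\[
f_\delta(x) = \frac{\rho(\alpha + \delta x) - \rho(\alpha)}{\delta\, c},
\]
which is nothing but a difference quotient for $\rho$ at $\alpha$ with step $\delta x$. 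The goal is then to show $f_\delta \to \iota$ uniformly on $L$ as $\delta \to 0^+$.

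To prove the uniform convergence, fix $M > 0$ with $L \subseteq [-M, M]$ and restrict to $\delta$ small enough that $[\alpha - \delta M,\, \alpha + \delta M] \subseteq I$. For each $x \in L$, the mean value theorem supplies a point $\xi_{x,\delta}$ between $\alpha$ and $\alpha + \delta x$ with $\rho(\alpha + \delta x) - \rho(\alpha) = \rho'(\xi_{x,\delta})\,\delta x$, so that
\[
\left| f_\delta(x) - x \right| = \frac{|x|}{|c|}\,\left| \rho'(\xi_{x,\delta}) - \rho'(\alpha) \right| \leq \frac{M}{|c|}\, \sup_{|t| \leq \delta M} \left| \rho'(\alpha + t) - \rho'(\alpha) \right|.
\]
The right-hand side does not depend on $x$, and it tends to $0$ as $\delta \to 0^+$ because $\rho'$ is continuous at $\alpha$. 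Hence, given $\epsilon > 0$, choosing $\delta$ to make this bound below $\epsilon$ yields an enhanced neuron with $\chebnorm{f_\delta - \iota} < \epsilon$ on $L$, as required.

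I do not expect a genuine obstacle here; the one place that needs care is the \emph{uniformity} of the limit over $L$, which is precisely why the hypothesis asks for continuity of the derivative at $\alpha$ (so that $\sup_{|t| \leq \delta M}|\rho'(\alpha + t) - \rho'(\alpha)| \to 0$) rather than mere differentiability, and why $\rho'(\alpha) \neq 0$ is needed (so that $\psi_\delta$ is a nondegenerate affine map). An alternative to invoking the mean value theorem would be to expand $\rho(\alpha + \delta x) = \rho(\alpha) + c\,\delta x + o(\delta x)$ directly and control the error term uniformly for $|x| \leq M$, but the route above keeps the estimate cleanest.
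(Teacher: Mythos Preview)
Your argument is correct and is essentially identical to the paper's: both form the enhanced neuron $x \mapsto (\rho(\alpha + hx)-\rho(\alpha))/(h\rho'(\alpha))$, apply the Mean Value Theorem, and use continuity of $\rho'$ at $\alpha$ (the paper phrases this via a modulus of continuity $\omega$, you via $\sup_{|t|\leq \delta M}|\rho'(\alpha+t)-\rho'(\alpha)|$) to obtain uniform convergence on $L$.
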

\begin{proof}
By assumption, as $\rho$ is \emph{continuously} differentiable, there exists $[a, b] \subseteq \reals$ with $a \neq b$, on some neighbourhood of which $\rho$ is differentiable, and $\alpha \in (a, b)$ at which $\rho'$ is continuous, and for which $\rho'(\alpha)$ is nonzero.

For $h \in \reals \setminus \{0\}$, let $\phi_h(x) = hx + \alpha$, and let
\begin{equation*}
\psi_h(x) =\frac{x -\rho(\alpha)}{h \rho'(\alpha)}.
\end{equation*}
Then
$\iota_h = \psi_h \circ \rho \circ \phi_h$
is of the form that an enhanced neuron can represent. Then for all $u \in [a, b]$, by the Mean Value Theorem there exists $\xi_u$ between $u$ and $\alpha$ such that
\begin{align*}
\rho(u) &= \rho(\alpha) + (u - \alpha)\rho'(\xi_u),
\end{align*}
and hence
\begin{align*}
\iota_h(x) &= (\psi_h \circ \rho \circ \phi_h)(x) \\
&= \psi_h\left(\rho(\alpha) + hx\rho'(\xi_{hx + \alpha})\right)\\
 &= \frac{x \rho'(\xi_{hx + \alpha})}{\rho'(\alpha)}
\end{align*}
for $h$ sufficiently small that $\phi_h(L) \subseteq [a, b]$.

Now let $\rho'$ have modulus of continuity $\omega$ on $[a, b]$. Let $\iota \colon \reals \to \reals$ represent the identity function. Then for all $x \in L$,
\begin{align*}
\abs{\iota_h(x) - \iota(x)} &= \abs{x}\abs{\frac{\rho'(\xi_{hx + \alpha}) - \rho'(\alpha)}{\rho'(\alpha)}}\\
&\leq \frac{\abs{x}}{\abs{\rho'(\alpha)}} \omega(hx),
\end{align*}
and so $\iota_h \to \iota$ uniformly over $L$.
\end{proof}
\begin{notation*}
Throughout the rest of this paper $\iota_h$ will be used to denote such an approximation to the identity function, where $\iota_h \to \iota$ uniformly as $h \to 0$.
\end{notation*}
An enhanced neuron may be described as performing (for example) the computation $x \mapsto \iota_h(4 x + 3)$. This is possible as the affine transformation $x \mapsto 4x + 3$ and the affine transformation $\phi_h$ (from the description of $\iota_h$) may be combined together into a single affine transformation.

Now that we can approximate the identity funciton, another important ingredient will be a model that actually uses identity functions for us to approximate! As such we now consider the `Register Model', which represents a simplification of a neural network.
\begin{restatable}[Register Model]{prop}{registermodel}\label{register-model}
Let $\rho \colon \reals \to \reals$ be any continuous nonpolynomial function. Let $\Inm$ represent the class of neural networks with $n$ neurons in the input layer, $m$ neurons in the output layer, and an arbitrary number of hidden layers, each with $n + m$ neurons with the identity activation function, and one neuron with activation function $\rho$. Let $K \subseteq \reals^n$ be compact. Then $\Inm$ is dense in $C(K; \reals^m)$.
\end{restatable}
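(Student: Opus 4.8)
The plan is to reduce to the classical Universal Approximation Theorem (Theorem~\ref{Cybenko}): I will show that a network in $\Inm$ can reproduce the output of any single-hidden-layer network exactly, with the $n + m$ identity neurons in each hidden layer acting as registers. The $n$ ``input registers'' carry the original input $x$ forward through the layers; the $m$ ``output registers'' accumulate the output, one summand at a time; and the single $\rho$-neuron in each layer contributes one hidden unit to that accumulating sum.

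Since $\rho$ is nonpolynomial, Theorem~\ref{Cybenko} applied to each component separately gives, for any target $f = (f_1, \ldots, f_m) \in C(K; \reals^m)$ and any $\epsilon > 0$, single-hidden-layer networks
\[
\widetilde{f}_j(x) = c_j + \sum_{i=1}^{N_j} v_{i,j}\, \rho\bigl(\langle w_{i,j}, x\rangle + b_{i,j}\bigr), \qquad j = 1, \ldots, m,
\]
with $\sup_{x \in K} \abs{\widetilde{f}_j(x) - f_j(x)} < \epsilon$. It then suffices to build a network in $\Inm$ that computes $(\widetilde{f}_1, \ldots, \widetilde{f}_m)$ exactly.

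Take $L = N_1 + \cdots + N_m + 1$ hidden layers, and partition layers $1, \ldots, L-1$ into consecutive blocks $B_1, \ldots, B_m$ with $\abs{B_j} = N_j$; write $B_j = \{\ell_j^{(1)} < \cdots < \ell_j^{(N_j)}\}$. Wire the network as follows: (i) each input register in layer $\ell$ copies (via the identity) the corresponding input register of layer $\ell - 1$, or, in layer $1$, the corresponding input neuron, so that every input register always holds $x$; (ii) if $\ell = \ell_j^{(i)}$, then the $\rho$-neuron of layer $\ell$ reads $x$ off the input registers of layer $\ell - 1$ and outputs $\rho(\langle w_{i,j}, x\rangle + b_{i,j})$ (in the padding layer $\ell = L$ its value is immaterial); (iii) output register $j$ in layer $\ell$ equals output register $j$ in layer $\ell - 1$, plus $v_{i,j}$ times the $\rho$-neuron of layer $\ell - 1$ whenever $\ell - 1 = \ell_j^{(i)}$, and is initialised in layer $1$ to the constant $c_j$ (using the bias, with all input coefficients zero). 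The $m$-neuron output layer then simply copies the $m$ output registers.

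Because all of these identity neurons are exact (not merely the approximations $\iota_h$), no error propagates, and an easy induction on $\ell$ shows that output register $j$ in layer $\ell$ holds $c_j + \sum_{i \,:\, \ell_j^{(i)} < \ell} v_{i,j}\, \rho(\langle w_{i,j}, x\rangle + b_{i,j})$; in particular at $\ell = L$ it holds $\widetilde{f}_j(x)$. Hence the constructed network lies in $\Inm$ and computes $(\widetilde{f}_1, \ldots, \widetilde{f}_m)$, which is within $\epsilon$ of $f$ uniformly on $K$. There is no real obstacle here beyond bookkeeping; the two points that need care are that the single $\rho$-neuron of a layer feeds all $m$ output registers with the same argument — handled by giving each component $f_j$ its own block of layers, so that only one output register is ever being updated — and the one-layer lag between a $\rho$-neuron firing and its value reaching the output registers, which is precisely why the extra padding layer $L$ is included.
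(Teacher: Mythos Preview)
Your proof is correct and follows essentially the same approach as the paper's: reduce to the classical Universal Approximation Theorem, then rebuild each shallow network inside $\Inm$ by using $n$ identity neurons as input registers, $m$ identity neurons as output accumulators, and the single $\rho$-neuron per layer to compute one hidden unit at a time, processing the $m$ output components in consecutive blocks of layers. The only cosmetic differences are that the paper absorbs the output-layer affine map into the enhanced $\sigma_{i,j}$ (so no separate constants $c_j$ or coefficients $v_{i,j}$ appear) and handles the final one-layer lag by letting the output layer itself add the last $\rho$-value rather than inserting your padding layer $L$.
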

The Register Model is somewhat similar to the constructions used in \citep{LuEtAl, Hanin2017}, although their constructions are specific to the ReLU. As such we defer the proof to Appendix \ref{register-model-appendix}.

Next we consider another different simplification, which we refer to as the `Square Model'. Its proof is a little involved, so for clarity we present it in a new subsection.
\subsection{Square Model}

\begin{lem}\label{mult}
One layer of two enhanced neurons, with square activation function, may exactly represent the multiplication function $(x, y) \mapsto xy$ on $\reals^2$.
\end{lem}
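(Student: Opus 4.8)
The plan is to invoke the classical polarization identity
$xy = \tfrac14\bigl((x+y)^2 - (x-y)^2\bigr)$,
which expresses multiplication as an affine combination of squares of affine functions of $(x,y)$ — precisely the form that a layer of enhanced neurons with square activation, followed by an affine combination, can realise.

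Concretely, I would take the first enhanced neuron to compute $(x,y)\mapsto \rho(x+y) = (x+y)^2$ by choosing its input affine map to be $\phi_1(x,y)=x+y$ and its output affine map to be the identity; likewise the second enhanced neuron computes $(x,y)\mapsto \rho(x-y) = (x-y)^2$ via $\phi_2(x,y)=x-y$. These two neurons together constitute a single layer. Since the model allows forming affine combinations of the enhanced neurons in a layer, I then take $\tfrac14(x+y)^2 - \tfrac14(x-y)^2$, which expands to exactly $xy$ for every $(x,y)\in\reals^2$.

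There is essentially no obstacle beyond checking that this construction genuinely matches the statement's bookkeeping: the maps $\phi_1,\phi_2\colon\reals^2\to\reals$ are affine, the activation is the square function $\rho$, the output scalings $t\mapsto\tfrac14 t$ and $t\mapsto-\tfrac14 t$ are affine, and summing the two results is an allowed affine combination (the scalars $\pm\tfrac14$ may even be absorbed into that combination, so plain square activations with identity output maps suffice). The representation is exact rather than approximate because no linearisation of $\rho$ is used — the square activation is applied on the nose — so the multiplication map $(x,y)\mapsto xy$ is represented exactly on all of $\reals^2$.
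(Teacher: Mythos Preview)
Your proof is correct and essentially identical to the paper's own argument: the paper also takes $\eta=(x+y)^2$, $\zeta=(x-y)^2$ and outputs $(\eta-\zeta)/4$, noting that the final affine combination is permitted between enhanced neurons.
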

\begin{proof}
Let the first neuron compute $\eta = (x + y)^2$. Let the second neuron compute $\zeta = (x - y)^2$. Then $xy = (\eta - \zeta) / 4$. (This final affine transformation is allowed between enhanced neurons.)
\end{proof}

\begin{lem}\label{fused}
Fix $L \subseteq \reals^2$ compact. Three layers of two enhanced neurons each, with square activation function, may uniformly approximate $(x, y) \mapsto (x^2, y(x + 1))$ arbitrarily well on $L$.
\end{lem}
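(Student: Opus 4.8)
The plan is to realise this ``fused'' map by stacking two elementary operations that are already available: the difference-of-squares multiplier of Lemma~\ref{mult}, and the identity approximant $\iota_h$ of Lemma~\ref{identity1}, which may be used with the square activation since $t\mapsto t^2$ is $C^1$ everywhere with nonvanishing derivative away from the origin. The reason for running three layers at once is that the two output coordinates are produced by interleaved computations that must be brought to a common depth.

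In more detail, I would use the first layer to apply the multiplication construction of Lemma~\ref{mult} to the pair $(x+1,\,y)$: one neuron forms $\eta=(x+1+y)^2$ and the other forms $\zeta=(x+1-y)^2$, so that $\tfrac{1}{4}(\eta-\zeta)=y(x+1)$ becomes available as an affine combination of the layer's outputs, while a copy of $x$ is carried through the layer by an identity-approximating enhanced neuron $\iota_h$. In the second layer, one neuron squares the carried copy of $x$, producing $x^2$ exactly, while another applies $\iota_h$ to the affine combination $\tfrac{1}{4}(\eta-\zeta)$ and so carries $y(x+1)$ forward; thus at the end of the second layer the pair $\bigl(x^2,\,y(x+1)\bigr)$ is present up to a small error. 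The third layer is then just a synchronising buffer: both coordinates are passed forward unchanged by $\iota_h$, so that the desired pair appears together at depth three.

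The only errors introduced are those of the finitely many applications of $\iota_h$ --- the squarings that produce $\eta$, $\zeta$ and $x^2$ are exact --- and since $L$ is compact all intermediate quantities stay in a fixed bounded subset of $\reals$. Hence, given $\varepsilon>0$, one first fixes this target accuracy and then, using Lemma~\ref{identity1} and the fact that only a bounded number of layers is involved, chooses the parameters $h$ small enough that the total accumulated error is at most $\varepsilon$ uniformly on $L$. I expect the main obstacle to be purely organisational: verifying that the two squarings, the preservation of $x$, and the forwarding of $y(x+1)$ all fit within the narrow layers, and that both output coordinates can be made to emerge at the same depth. Once that layout is pinned down, propagating the approximation errors is routine.
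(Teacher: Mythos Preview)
Your proposal has a genuine width problem in the very first layer. You allocate \emph{both} neurons of layer one to the difference-of-squares multiplier---one computes $(x+1+y)^2$, the other $(x+1-y)^2$---and then say ``a copy of $x$ is carried through the layer by an identity-approximating enhanced neuron $\iota_h$''. That is a third neuron; only two are available. Nor can $x$ be recovered afterwards: the only affine combinations of $(x+1+y)^2$ and $(x+1-y)^2$ are of the form $A\bigl((x+1)^2+y^2\bigr)+B\,y(x+1)+C$, which is never equal to $x$ (or to any nonconstant affine function of $x$ alone). So once both layer-one neurons are spent on the multiplication, $x$ is lost and there is nothing left to square in layer two. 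The obstacle you flagged as ``purely organisational'' is in fact structural.

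The paper's proof sidesteps this by introducing a second small parameter $s$. In layer one, one neuron carries $x$ via $\iota_h$ and the other computes $(x+sy+1)^2$---only one squaring, so the width budget is respected. Over the next two layers one forms $x^2$ from the carried copy, subtracts off the pieces $2x+1$ and $x^2$ from the expansion of $(x+sy+1)^2$, and divides the remainder $2s(xy+y)+s^2y^2$ by $2s$ to obtain $xy+y+\tfrac{s}{2}y^2$. Taking first $s$ small and then $h$ small gives the result. The essential idea you are missing is this ``differentiation in $s$'' trick that lets a single squared neuron encode the product approximately, freeing the second neuron to preserve $x$.
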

\begin{proof}
Let $h, s \in \reals \setminus \{0\}$. Let $\eta_1, \eta_2, \eta_3$ represent the first neuron in each layer; let $\zeta_1, \zeta_2, \zeta_3$ represent the second neuron in each layer. Let $\iota_h$ represent an approximation to the identity in the manner of Lemma \ref{identity1}. Using `$\approx$' as an informal notation to represent `equal to up to approximation of the identity', 
assign values to $\eta_1, \eta_2, \eta_3$ and $\zeta_1, \zeta_2, \zeta_3$ as follows:
\begin{align*}
\eta_1 &= \iota_h(x) &\zeta_1 &= (x + sy + 1)^2\\
 &\approx x,&&= x^2 + 2sxy + s^2 y^2 + 2x + 2sy + 1,\\
 \eta_2 &= (\eta_1)^2 & \zeta_2 &= \iota_h(\zeta_1 - 2 \eta_1 - 1) \\
 &\approx x^2,&& \approx x^2 + 2sxy + s^2 y^2 + 2sy,\\
 \eta_3 &= \iota_h(\eta_2) & \zeta_3 &=\iota_h((\zeta_2 - \eta_2)/2s)\\
 &\approx x^2, && \approx xy + y + s y^2/2.
\end{align*}

And so $\eta_3$ may be taken arbitrarily close to $x^2$ and $\zeta_3$ may be taken arbitrarily close to $y(x+1)$, with respect to $\chebnorm{\,\cdot\,}$ on $L$, by first taking $s$ arbitrarily small, and then taking $h$ arbitrarily small.
\end{proof}

\begin{lem}\label{reciprocal}
Fix $L \subseteq (0, 2)$ compact. Then multiple layers of two enhanced neurons each, with square activation function, may uniformly approximate $x \mapsto 1/x$ arbitrarily well on $L$.
\end{lem}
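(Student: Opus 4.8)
The plan is to build $x \mapsto 1/x$ from the telescoping identity
\[
\prod_{k=0}^{N}\bigl(1 + u^{2^k}\bigr) = \frac{1 - u^{2^{N+1}}}{1 - u}, \qquad u := 1 - x,
\]
valid for every $u \neq 1$. Since $L \subseteq (0,2)$ is compact there is $r \in (0,1)$ with $\abs{1-x} \leq r$ for all $x \in L$, and $\min L > 0$; hence $u^{2^{N+1}} \to 0$ uniformly on $L$, so the partial products $P_N(x) := \prod_{k=0}^{N}\bigl(1 + (1-x)^{2^k}\bigr)$ converge to $(1-u)^{-1} = 1/x$ uniformly on $L$, with $\chebnorm{P_N - 1/x} \leq r^{2^{N+1}}/\min L$. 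So, given $\epsilon > 0$, it suffices to fix $N$ with $r^{2^{N+1}}/\min L < \epsilon/2$ and then approximate the single function $x \mapsto P_N(x)$ uniformly on $L$ to within $\epsilon/2$.

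The key observation is that $P_N$ is generated by iterating the map of Lemma \ref{fused}. Writing $F(v,p) = (v^2, p(v+1))$, an immediate induction gives $F^{\,j}(1-x,\, 1) = \bigl(u^{2^{j}},\ \prod_{k=0}^{j-1}(1+u^{2^k})\bigr)$, so the second coordinate of $F^{\,N+1}(1-x,1)$ is exactly $P_N(x)$. The network is therefore: one layer of two enhanced neurons that stores the pair $(\iota_h(1-x),\, 1)$ — the constant $1$ is output exactly by a square neuron with constant pre-activation, and $\iota_h(1-x) \approx 1-x$ by Lemma \ref{identity1} applied to the affine map $x \mapsto 1-x$ — followed by $N+1$ consecutive copies of the three-layer, two-neuron-wide block of Lemma \ref{fused} that realises $F$, after which the approximation to $1/x$ is read from the second register. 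Every layer has width two, as required.

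It remains to control the accumulation of error, which is routine once $N$ is fixed. The true trajectory $\{F^{\,j}(1-x,1) : x \in L,\ 0 \leq j \leq N+1\}$ is a finite union of images of continuous functions of $x \in L$, hence lies in a fixed compact set $\mathcal{K} \subseteq \reals^2$; let $\mathcal{K}'$ be a compact neighbourhood of $\mathcal{K}$. Since $F$ is uniformly continuous on $\mathcal{K}'$, and each Lemma \ref{fused} block can be made to approximate $F$ uniformly on $\mathcal{K}'$ to within any prescribed tolerance (and the setup layer can make $\iota_h(1-x)$ arbitrarily close to $1-x$), an induction on $j$ — bounding the distance between the approximate and true trajectories by the fresh error introduced at the $j$th block plus the modulus of continuity of $F$ evaluated at the error accumulated so far — shows the final output lies within $\epsilon/2$ of $P_N(x)$ uniformly on $L$, provided the per-block tolerances are taken small enough. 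Combining with the previous paragraph completes the proof.

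The one genuinely creative step is the first: recognising that on $(0,2)$ one should expand $1/x$ as the rapidly convergent telescoping product $\prod_k (1 + (1-x)^{2^k})$, precisely because the update $(v,p) \mapsto (v^2, p(v+1))$ that generates it is exactly the primitive already made available in width two by Lemma \ref{fused}. (The more obvious geometric series $\sum_k (1-x)^k$ would instead require an iterate of the form $(v,p) \mapsto (v,\, 1+vp)$, which does not square its first argument and so cannot be run in width two with the tools at hand.) Everything after that identity is bookkeeping of the kind that recurs throughout these constructions.
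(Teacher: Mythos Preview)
Your proof is correct and follows essentially the same route as the paper: the telescoping product $\prod_k(1+(1-x)^{2^k})\to 1/x$ uniformly on $L$, realised by iterating the two-wide block of Lemma~\ref{fused}. The only cosmetic difference is the seed of the iteration---the paper initialises with $\eta_1=(1-x)^2$ and $\zeta_1=\iota_h(2-x)$, thereby absorbing the first $F$-step into the setup layer, whereas you start at $(\iota_h(1-x),1)$ and apply one extra fused block; your error discussion is also more explicit than the paper's, but the content is the same.
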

\begin{proof}
First note that
\begin{equation*}
\prod_{i = 0}^n (1 + x^{2^i}) \to \frac{1}{1 - x}
\end{equation*}
as $n \to \infty$, uniformly over compact subsets of $(-1, 1)$. 
Thus,
\begin{equation*}
(2 - x) \prod_{i = 1}^n (1 + (1 - x)^{2^i}) = \prod_{i = 0}^n (1 + (1 - x)^{2^i}) \to \frac{1}{x}
\end{equation*}
uniformly over $L$.

This has the following neural approximation: let $\eta_1 = (1-x)^2$ and $\zeta_1 = \iota_h(2 - x)$ be the neurons in the first layer, where $\iota_h$ is some approximation of the identity as in Lemma \ref{identity1}. Let $\kappa_h$ represent an approximation to $(x, y) \mapsto (x^2, y(x + 1))$ in the manner of Lemma \ref{fused}, with error made arbitrarily small as $h \to 0$. Now for $i \in \{1, 4, 7, 10, \ldots, 3n - 2\}$, recursively define $(\eta_{i + 3}, \zeta_{i + 3}) = \kappa_h(\eta_{i}, \zeta_{i})$, where we increase the index by three to represent the fact that three layers are used to perform this operation. So up to approximation, $\eta_{i+3} \approx (\eta_{i})^2$, and $\zeta_{i + 3} \approx \zeta_i (\eta_i + 1)$.

So $\zeta_{3n + 1} \to (2 - x) \prod_{i = 1}^n (1 + (1 - x)^{2^i})$ uniformly over $L$ as $h \to 0$. Thus the result is obtained by taking first $n$ large enough and then $h$ small enough.
\end{proof}


\begin{restatable}[Square Model]{prop}{squareprop}\label{squareprop}
Let $\rho(x) = x^2$. Let $K \subseteq \reals^n$ be compact. Then $\NN{n + m + 1}$ is dense in $C(K; \reals^m)$.
\end{restatable}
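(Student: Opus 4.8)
The plan is to reduce to polynomials via Stone--Weierstrass and then exploit the fact that the square activation squares affine functions for free, so that a \emph{single} extra register suffices.

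\textbf{Step 1 (reduction to a polynomial of a special form).} By the Stone--Weierstrass theorem, polynomial maps $\reals^n \to \reals^m$ are uniformly dense in $C(K; \reals^m)$, so it suffices to approximate a fixed polynomial $P = (P_1, \dots, P_m)$. Moreover, for a large enough power $d = 2^r$ (any $d \geq \deg P$ rounded up to a power of two), the functions $x \mapsto (a \cdot x + b)^d$, as $a \in \reals^n$ and $b \in \reals$ range over all values, span the space of all polynomials in $x_1, \dots, x_n$ of degree at most $d$; this is the classical fact that $d$-th powers of linear forms span the degree-$d$ part of the polynomial ring (dehomogenised in the variable dual to $b$). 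Hence each $P_l$ may be written as a finite linear combination $P_l(x) = \sum_{j} c_{j,l}\,(a_j \cdot x + b_j)^{2^r}$, where we may use the same exponent $2^r$ and the same finite family $\{(a_j, b_j)\}_j$ for every $l$ (padding with zero coefficients).

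\textbf{Step 2 (the construction).} Build a network of width $n + m + 1$ whose hidden layers carry a `tape' of registers: $n$ \emph{input registers} holding $x_1, \dots, x_n$; $m$ \emph{accumulator registers} holding running partial sums of $P_1, \dots, P_m$; and a single \emph{scratch register} $t$. Every register other than $t$ is carried unchanged from one layer to the next by an approximate-identity neuron $\iota_h$ as in Lemma~\ref{identity1} (applicable since $(x^2)' = 2x$ is continuous and nonzero at, say, $x = 1$). Process the terms $(a_j \cdot x + b_j)^{2^r}$ one at a time: for a given $j$, in one layer set $t \leftarrow (a_j \cdot x + b_j)^2$, which one square-activation enhanced neuron computes \emph{exactly} from the input registers; over the next $r - 1$ layers set $t \leftarrow t^2$, again exactly, so that afterwards $t = (a_j \cdot x + b_j)^{2^r}$; then in one final layer add $c_{j,l}\,t$ into accumulator register $l$ for each $l$ (an affine operation between enhanced neurons, realised through $\iota_h$) while leaving everything else in place. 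The next term simply overwrites $t$. After all terms have been processed the $m$ accumulators hold approximations to $P_1(x), \dots, P_m(x)$, which the identity output layer reads off.

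\textbf{Step 3 (error control, and where the difficulty lies).} Once $P$ and its decomposition are fixed, the architecture above is a fixed finite composition in which the only `moving parts' are the $\iota_h$ neurons; all intermediate quantities range over a fixed compact set (continuous images of $K$), so as $h \to 0$ the $\iota_h$ converge uniformly to the identity there and the realised map converges uniformly on $K$ to $P$. Choosing $h$ small and combining with Step~1 finishes the proof. The one thing to watch is to keep the computation of $t$ genuinely exact --- using the square activation directly, never $\iota_h$ --- so that the $r$ repeated squarings do not compound approximation error, and to fix the (bounded) compact set on which the $\iota_h$ must approximate the identity before sending $h \to 0$. The main obstacle is really fitting everything into width exactly $n + m + 1$: a naive `evaluate $P$ by a straight-line program' approach needs the multiplication gadget of Lemma~\ref{mult} (two neurons) on top of the $n + m$ registers, forcing width $n + m + 2$ (which is all one gets for general polynomials, cf.\ Proposition~\ref{polythmtwo}); expressing the target through powers of affine functions instead makes a single scratch register enough, because the square activation supplies each squaring for nothing.
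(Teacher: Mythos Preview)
Your proof is correct and takes a genuinely different---and cleaner---route from the paper's. The paper also reduces to polynomials and also carries $n$ input registers, $m$ output accumulators, and one extra neuron, but it computes $g_2,\ldots,g_m$ monomial-by-monomial using the two-neuron multiplication gadget (Lemma~\ref{mult}), temporarily borrowing the $g_1$-accumulator as a second scratch cell; the hard part is then $g_1$, for which the paper invents a nested Horner-with-ratios expression and builds an approximate reciprocal (Lemmas~\ref{fused} and~\ref{reciprocal}) so that only a single scratch neuron is ever needed. Your key insight---writing each $P_l$ as a finite sum of $(a\cdot x+b)^{2^r}$ via the classical fact that $d$th powers of linear forms span degree-$\leq d$ polynomials---sidesteps the whole reciprocal machinery: each term is computed \emph{exactly} by $r$ successive squarings in one neuron and then added into the accumulators affinely, so no component is harder than any other and width $n+m+1$ falls out immediately. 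What the paper's approach buys is that it works directly with monomials (no algebraic lemma about linear forms needed), and its multiplication/reciprocal gadgets are reused elsewhere (e.g.\ in the proof of Proposition~\ref{polythmtwo}); what your approach buys is a much shorter proof of the Square Model itself, entirely avoiding Lemmas~\ref{fused} and~\ref{reciprocal} and the delicate $\widetilde g_1$ construction, and with error analysis that is transparent since only the $\iota_h$ passes are inexact.
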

\begin{proof}
Fix $f = (f_1, \ldots, f_m) \in C(K; \reals^m)$. Fix $\epsilon > 0$. By precomposing with an affine function, which may be absorbed into the first layer of the network, assume without loss of generality that
\begin{equation}\label{eq:kassume}
K \subseteq (1, 2)^n.
\end{equation}
By the Stone--Weierstrass Theorem there exist polynomials $g_1, \ldots, g_m$ in the variables $x_1, \ldots x_n$ approximating $f_1, \ldots, f_m$ to within $\epsilon/3$ with respect to $\chebnorm{\,\cdot\,}$.

We will construct a network in $\NN{n + m + 1}$ approximating $g_1, \ldots, g_n$. There will be a total of $n + m + 1$ neurons in each hidden layer. 
In each hidden layer, for each $i \in \{1, \ldots, n\}$, we associate an input $x_i$ with a neuron, which we shall refer to as the $x_i$-in-register neuron. Similarly for each $i \in \{1, \ldots, m\}$, we will associate an output $g_i$ and a neuron, which we shall refer to as the $g_i$-out-register neuron. The final neuron in each layer will be referred to as the computation neuron.

Our proof will progress by successively adding layers to the network. 

We begin by constructing approximations to $g_2, \ldots, g_m$. (Constructing the approximation to the final $g_1$ will be more challenging.) These next few paragraphs may be skipped if $m = 1$.

In each hidden layer, for $i \in \{1, \ldots, n\}$, have the $x_i$-in-register neuron apply the approximate identity function in the manner of Lemma \ref{identity1} to the $x_i$-in-register neuron of the previous layer, or to the input $x_i$ if it is the first hidden layer. In this way the in-register neurons will preserve the inputs to the network, so that the values of $x_i$ are accessible by the other neurons in every layer.


(At least, up to an arbitrarily good approximation of the identity. 
For the sake of sanity of notation, we shall suppress this detail in our notation, and refer to our neurons in later layers as having e.g. `$x_1$' as an input to them.)

Now write $g_2 = \sum_{j = 1}^N \delta_j$, where each $\delta_j$ is a monomial. Using just the computation neuron and the $g_1$-out-register neuron in multiple consecutive layers, perform successive multiplications in the manner of Lemma \ref{mult} 
to compute the value of $\delta_1$. For example, if $\delta_1 = x_1^2 x_2 x_3$, then a suitable chain of multiplications is $x_1 (x_1 (x_2 x_3))$. In each layer, each $x_i$ is available as an input because it is stored in the in-register neurons, and the intermediate partial products are available as they have just been computed in the preceding layer. 
The value for $\delta_1$ is then stored in the $g_2$-out-register neuron 
and kept through the subsequent layers via approximate identity functions.

The previous paragraph is somewhat wordy, so it is worth pausing to make clear what the appropriate mental model is for validating that this construction is correct. It is simply that at each layer, we have at most $n + m + 1$ values available from the preceding layer, and we must now allocate a budget of at most $n + m + 1$ enhanced neurons, describing how to obtain the at most $n + m + 1$ outputs from the layer. Values may be copied, moved and added between neurons using the affine part of a layer, and preserved between layers using $\iota_h$. We have not yet needed our full budget of neurons, so far.

This process is then repeated for $\delta_2$, again using just the computation neuron and the $g_1$-out-register neuron. The result is then added on to the $g_2$-out-register neuron, via the affine part of the operation of this neuron. Repeat for all $j$ until all of the $\delta_j$ have been computed and added on, so that the $g_2$-out-register neuron stores an approximation to $g_2$.

This is only an approximation in that we have used approximations to the identity function; other than that it is exact. As such, by taking sufficiently good approximations of the identity function, this will be a uniform approximation to $g_2$ over $K$. For the remaining layers of the network, have the $g_2$-out-register neurons just preserve this value with approximate identity functions.

Now repeat this whole process for $g_i$ and the $g_i$-out-register neurons, for $i \in \{3, \ldots, m\}$. 
Let the computed values be denoted $\widehat{g}_2, \ldots, \widehat{g}_m$. (With the `hat' notation because of the fact that these are not the values $g_2, \ldots, g_m$, due to the approximate identity functions in between.)

The difficult bit is computing an approximation to $g_1$, as it must be done without the `extra' $g_1$-in-register neuron. Going forward we now only have $n + 2$ neurons available in each layer: the $n$ in-register neurons (which have so far been storing the inputs $x_1, \ldots x_n$), the computation neuron, and the $g_1$-out-register neuron.

Written in terms of monomials, let $g_1 = \sum_{j = 1}^M \gamma_{j}$. Then $g_1$ may be written as
\begin{equation*}
g_1 = \gamma_{1} \left(1 + \frac{\gamma_{2}}{\gamma_{1}}\left(1 + \frac{\gamma_{3}}{\gamma_{2}}\left(\cdots \left(1 + \frac{\gamma_{M-1}}{\gamma_{M-2}}\left(1 + \frac{\gamma_{M}}{\gamma_{M - 1}}\right)\right) \cdots \right)\right)\right).
\end{equation*}
Note that this description is defined over $K$, as $K$ is bounded away from the origin by equation \eqref{eq:kassume}.

Now write $\gamma_{j} = \prod_{k = 1}^{n} x_k^{\theta_{j, k}}$, for $\theta_{j, k} \in \naturals_{0}$. Substituting this in,
\begin{align*}
g_1 = \left[\prod_{k = 1}^{n} x_k^{\theta_{1, k}}\right] \Bigg(1 + \frac{\prod_{k = 1}^{n} x_k^{\theta_{2, k}}}{\prod_{k = 1}^{n} x_k^{\theta_{1, k}}}&\Bigg(1 + \frac{\prod_{k = 1}^{n} x_k^{\theta_{3, k}}}{\prod_{k = 1}^{n} x_k^{\theta_{2, k}}}\Bigg(\cdots \\
&\left(1 + \frac{\prod_{k = 1}^{n} x_k^{\theta_{M-1, k}}}{\prod_{k = 1}^{n} x_k^{\theta_{M - 2, k}}}\left(1 + \frac{\prod_{k = 1}^{n} x_k^{\theta_{M, k}}}{\prod_{k = 1}^{n} x_k^{\theta_{M-1, k}}}\right)\right) \cdots \Bigg)\Bigg)\Bigg).
\end{align*}

Now let $\sup K$ be defined by
\begin{equation*}
\sup K = \sup \{ x_i \,\vert\, (x_1, \ldots, x_n) \in K\},
\end{equation*}
so that $1 < \sup K < 2$. Let $r$ be an approximation to $x \mapsto 1/x$ in the manner of Lemma \ref{reciprocal}, with the $L$ of that proposition given by
\begin{equation}\label{eq:lchoice}
L = [(\sup K)^{-1} - \alpha, \sup K + \alpha] \subseteq (0, 2),
\end{equation}
where $\alpha > 0$ is taken small enough that the inclusion holds.

Let $r^a$ denote $r$ composed $a$ times. By taking $r$ to be a suitably good approximation, we may ensure that $\widetilde{g}_1$ defined by
\begin{align}
\widetilde{g}_1 = &\left[\prod_{k = 1}^{n} r^{2M - 2}(x_k)^{\theta_{1, k}}\right] \Bigg(1 + \left[\prod_{k = 1}^{n} r^{2M - 3}(x_k)^{\theta_{1, k}}\right]\left[\prod_{k = 1}^{n}r^{2M - 4}(x_k)^{\theta_{2, k}}\right]\nonumber\\
&\hspace{9.35em}\Bigg(1 + \left[\prod_{k = 1}^{n} r^{2M - 5}(x_k)^{\theta_{2, k}}\right]\left[\prod_{k = 1}^{n}r^{2M - 6}(x_k)^{\theta_{3, k}}\right]\\&\hspace{10em}\Bigg(\cdots\nonumber\\
&\hspace{10.65em}\Bigg(1 + \left[\prod_{k = 1}^{n}r^3(x_k)^{\theta_{M-2, k}}\right]\left[\prod_{k = 1}^{n} r^2(x_k)^{\theta_{M-1, k}}\right]\nonumber\\
&\hspace{11.3em}\left(1 + \left[\prod_{k = 1}^{n} r(x_k)^{\theta_{M-1, k}}\right]\left[\prod_{k = 1}^{n} x_k^{\theta_{M, k}}\right]\right)\Bigg)\nonumber\\
&\hspace{13em}\cdots \Bigg)\Bigg)\Bigg)\label{eq:tildeg}
\end{align}
is an approximation to $g_1$ in $K$, to within $\epsilon/3$, with respect to $\chebnorm{\,\cdot\,}$. This is possible by equations (\ref{eq:kassume}) and (\ref{eq:lchoice}); in particular the approximation should be sufficiently precise that
\begin{equation*}
r^{2M - 2}([(\sup K)^{-1}, \sup K]) \subseteq L,
\end{equation*}
which is possible due to the margin $\alpha > 0$. Note how $r^2$, and thus $r^4$, $r^6$, \ldots, $r^{2M - 2}$, are approximately the identity function on $L$.

This description of $\widetilde{g}_1$ is now amenable to representation with a neural network. The key fact about this description of $\widetilde{g}_1$ is that, working from the most nested set of brackets outwards, the value of $\widetilde{g}_1$ may be computed by performing a single chain of multiplications and additions, along with occasionally taking the reciprocal of all of the input values. Thus unlike our earlier computations involving the monomial $\delta_j$, we do not need to preserve an extra partially-computed piece of information between layers.

So let the computation neuron and the $g_1$-out-register neuron perform the multiplications, layer-by-layer, to compute $\prod_{k = 1}^{n} x_k^{\theta_{M, k}}$, in the manner of Lemma \ref{mult}. Store this value in the $g_1$-out-register neuron.

Now use the computation neuron and the $x_1$-in-register neuron, across multiple layers, to compute $r(x_1)$, in the manner of Lemma \ref{reciprocal}. Eventually the $x_1$-in-register neuron will be storing $r(x_1) \approx 1/x_1$. Repeat for the other in-register neurons, so that they are collectively storing $r(x_1), \ldots, r(x_n)$.

Now the computation neuron and the $g_1$-out-register neuron may start multipying $r(x_1), \ldots, r(x_n)$ on to $\prod_{k = 1}^{n} x_k^{\theta_{M, k}}$ (which is the value presently stored in the $g_1$-out-register neuron) the appropriate number to times to compute $\left[\prod_{k = 1}^{n} r(x_k)^{\theta_{M-1, k}}\right]\left[\prod_{k = 1}^{n} x_k^{\theta_{M, k}}\right]$, by Lemma \ref{mult}. Store this value in the out-register neuron. Then add one (using the affine part of a layer). The out-register neuron has now computed the expression in the innermost bracket in equation \eqref{eq:tildeg}.

The general pattern is now clear: apply $r$ to all of the in-register neurons again to compute $r^2(x_i)$, multiply them on to the value in the out-register neuron, and so on. Eventually the $g_1$-out-register neuron will have computed an approximation to $\widetilde{g}_1$. It will have computed some approximation $\widehat{g}_1$ to this value, because of the identity approximations involved.

Thus the out-register neurons have computed $\widehat{g}_1, \ldots, \widehat{g}_m$. Now simply have the output layer copy the values from the out-register neurons.

Uniform continuity preserves uniform convergence, compactness is preserved by continuous functions, and a composition of two uniformly convergent sequences of functions with uniformly continuous limits is again uniformly convergent. So by taking all of the (many) identity approximations throughout the network to be suitably precise, then $\widetilde{g}_1$ and $\widehat{g}_1$ may be taken within $\epsilon/3$ of each other, and the values of $\widehat{g}_2, \ldots, \widehat{g}_m$ and $g_2, \ldots, g_m$ may be taken within $2 \epsilon / 3$ of each other, in each case with respect to $\chebnorm{\,\cdot\,}$ on $K$.


Thus $(\widehat{g}_1, \ldots, \widehat{g}_m)$ approximates $f$ with total error no more than $\epsilon$, and the proof is complete.
\end{proof}

\begin{rem}\label{divrem}
Lemma \ref{reciprocal} is key to the proof of Proposition \ref{squareprop}. It was fortunate that the reciprocal function may be approximated by a network of width two - note that even if Proposition \ref{squareprop} were already known, it would have required a network of width three. It remains unclear whether an arbitrary-depth network of width two, with square activation function, is dense in $C(K)$.
\end{rem}

\begin{rem}\label{rema}
Note that allowing a single extra neuron in each layer would remove the need for the trick with the reciprocal, as it would allow $g_1$ to be computed in the same way as $g_2, \ldots, g_m$. Doing so would dramatically reduce the depth of the network. We are thus paying a heavy price in depth in order to reduce the width by a single neuron.
\end{rem}



\subsection{Key results}

\mainthm
\begin{proof}
Let $f \in C(K; \reals^m)$ and $\epsilon > 0$. Set up a neural network as in the Register Model (Proposition \ref{register-model}), approximating $f$ to within $\epsilon / 2$. Every neuron requiring an identity activation function in the Register Model will instead approximate the identity with $\iota_h$, in the manner of Lemma \ref{identity1}.

Uniform continuity preserves uniform convergence, compactness is preserved by continuous functions, and a composition of two uniformly convergent sequences of functions with uniformly continuous limits is again uniformly convergent. 
Thus the new model can be taken within $\epsilon / 2$ of the Register Model, with respect to $\chebnorm{\,\cdot\,}$ in $K$, by taking $h$ sufficiently small.
\end{proof}

\begin{rem}\label{relucorrem}
This of course implies approximation in $L^p(K, \reals^m)$ for $p \in [1, \infty)$. However, when $\rho$ is the ReLU activation function, then Theorem \ref{relucor}, later, shows that in fact the result may be generalised to noncompact domains.
\end{rem}


Moving on, we consider polynomial activation functions. 
We now show that it is a consequence of Proposition \ref{squareprop} that any (polynomial) activation function which can approximate the square activation function, in a suitable manner, is also capable of universal approximation.

\polythmtwo
\begin{proof}
Fix $\alpha \in \reals$ such that $\rho''(\alpha) \neq 0$, which exists as $\rho$ is nonaffine. Now let $h \in (0, \infty)$. Define $\sigma_h \colon \reals \to \reals$ by
\begin{equation*}
\sigma_h(x) = \frac{\rho(\alpha + hx) - 2\rho(\alpha) + \rho(\alpha - hx)}{h^2 \rho''(\alpha)}.
\end{equation*}
Then Taylor expanding $\rho(\alpha + hx)$ and $\rho(\alpha - hx)$ around $\alpha$,
\begin{align*}
\sigma_h(x) =\null &\frac{\rho(\alpha) + hx \rho'(\alpha) + h^2 x^2 \rho''(\alpha)/2 +\bigO(h^3 x^3)}{h^2 \rho''(\alpha)} - \frac{2\rho(\alpha)}{h^2 \rho''(\alpha)} +\\
&\frac{\rho(\alpha) - hx \rho'(\alpha) + h^2 x^2 \rho''(\alpha)/2 + \bigO(h^3 x^3)}{h^2 \rho''(\alpha)}\\
=\null& x^2 + \bigO(hx^3).
\end{align*}
Observe that $\sigma_h$ needs precisely two operations of $\rho$ on (affine transformations of) $x$, and so may be computed by two enhanced neurons with activation function $\rho$. Thus the operation of a single enhanced neuron with square activation function may be approximated by two enhanced neurons with activation function $\rho$.

Let $N$ be any network as in the Square Model (Proposition \ref{squareprop})
. Let $\ell$ be any hidden layer of $N$; it contains $n + m + 1$ neurons. Let $\eta$ be a vector of the values of the neurons of the previous layer. Let $\phi_i$ be the affine part of the $i$th neuron of $\ell$, so that $\ell$ computes $\phi_1(\eta)^2, \ldots, \phi_{n + m + 1}(\eta)^2$. Then this may equivalently be calculated with $n + m + 1$ layers of $n + m + 1$ neurons each, with $n + m$ of the neurons in each of these new layers using the identity function, and one neuron using the square activation function. This is done by having the first of these new layers apply the $\phi_i$, and having the $i$th layer square the value of the $i$th neuron. See Figure \ref{layer-expand}.

\begin{figure}
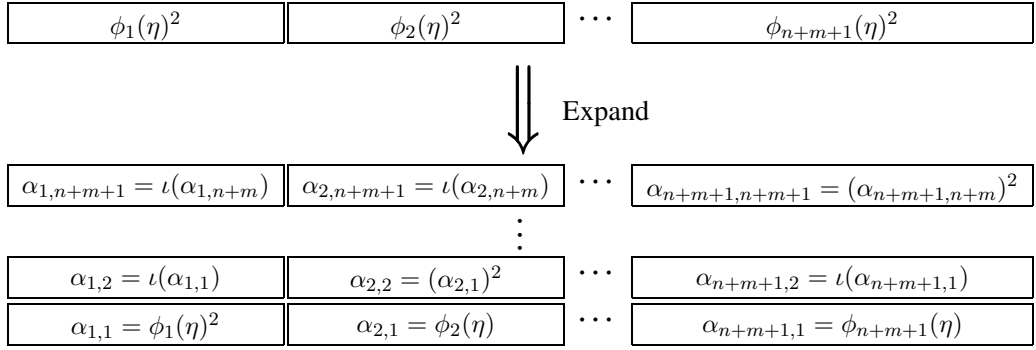
\centering
\resizebox{0.9\textwidth}{!}{
\begin{tabular}[t]{@{}c@{}c@{}ccc}
\simpleflayer({$\phi_1(\eta)^2$}, {$\phi_2(\eta)^2$}, {$\phi_{n + m + 1}(\eta)^2$})
\\\vspace{0pt}\\
\multicolumn{5}{c}{\vspace{3pt}\hphantom{\quad\large{Expand}} \scalebox{2}{\rotatebox[origin=c]{-90}{$\boldsymbol{\Longrightarrow}$}} \quad\large{Expand}}\\
\simpleflayer({$\alpha_{1, n + m + 1} = \iota(\alpha_{1, n + m})$}, {$\alpha_{2, n + m + 1} = \iota(\alpha_{2, n + m})$}, {$\alpha_{n + m + 1, n + m + 1} = (\alpha_{n + m + 1, n + m})^2$})
\multicolumn{5}{c}{\rotatedcdots\vspace{2pt}}\\
\simpleflayer({$\alpha_{1, 2} = \iota(\alpha_{1, 1})$}, {$\alpha_{2, 2} = (\alpha_{2, 1})^2$}, {$\alpha_{n + m + 1, 2} = \iota(\alpha_{n + m + 1, 1})$})
\simpleflayer({$\alpha_{1, 1} = \phi_1(\eta)^2$}, {$\alpha_{2, 1} = \phi_2(\eta)$}, {$\alpha_{n + m + 1, 1} = \phi_{n + m + 1}(\eta)$})
\end{tabular}}
\caption{A layer with square activation functions is equivalent to multiple layers with only a single square activation function in each layer. The other neurons use the identity activation function, denoted $\iota$.}\label{layer-expand}
\end{figure}

Apply this procedure to every layer of $N$; call the resulting network $\widetilde{N}$. It will compute exactly the same function as $N$, and will have $n + m + 1$ times as many layers, but will use only a single squaring operation in each layer.

Create a copy of $\widetilde{N}$, call it $\widetilde{N}_h$. Replace its identity activation functions with approximations in the manner of Lemma \ref{identity1}, using activation function $\rho$. Replace its square activation functions (one in each layer) by approximations in the manner described above with $\sigma_h$; this requires an extra neuron in each hidden layer, so that the network is now of width $n + m + 2$. Thus $\widetilde{N}_h$ uses the activation function $\rho$ throughout.

Uniform continuity preserves uniform convergence, compactness is preserved by continuous functions, and a composition of two uniformly convergent sequences of functions with uniformly continuous limits is again uniformly convergent. 
Thus the difference between $\widetilde{N}_h$ and $\widetilde{N}$, with respect to $\chebnorm{\,\cdot\,}$ on $K$, may be taken arbitrarily small by taking $h$ arbitrarily small.
\end{proof}

\begin{rem}\label{remb}
It is possible to construct shallower networks analogous to $\widetilde{N}$. This is because the proof of Proposition \ref{squareprop} actually uses many of the network's neurons to approximate the identity, so the identity activation functions of $\widetilde{N}$ may be used directly.
\end{rem}

\begin{rem}\label{rempoly}
That $\rho$ is polynomial is never really used in the proof of Proposition \ref{polythmtwo}. It is simply that a certain amount of differentiability is required, and all such nonpolynomial functions are already covered by Proposition \ref{mainthm}, as a nonzero second derivative at $\alpha$ implies a nonzero first derivative somewhere close to $\alpha$. Thus in principle this provides another possible construction by which certain networks may be shown to exhibit universal approximation.

\end{rem}

Given both Proposition \ref{mainthm} and Proposition \ref{polythmtwo}, then Theorem \ref{megathm} follows immediately.

\subsection{Extensions of these results to other special cases}
We begin by showing how the Register Model may also be used to handle the case of certain nowhere-differentiable activation functions. Our manner of proof may be extended to other nondifferentiable activation functions as well.



\begin{lem}\label{identity3}
Let $w \colon \reals \to \reals$ be any bounded continuous nowhere differentiable function. Let $\rho(x) = \sin(x) + w(x)\ee^{-x}$. Let $L \subseteq \reals$ be compact. Then a single enhanced neuron with activation function $\rho$ may uniformly approximate the identity function $\iota \colon \reals \to \reals$ on $L$, with arbitrarily small error.
\end{lem}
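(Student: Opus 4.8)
The plan is to follow the template of Lemma~\ref{identity1} — realise the approximate identity as a composition $\psi \circ \rho \circ \phi$ of $\rho$ with affine maps $\phi, \psi$ — but since $\rho$ is nowhere differentiable the Mean Value Theorem is unavailable. Instead I would exploit the explicit structure of $\rho$: the summand $\sin$ is smooth and already behaves like a (shifted) identity near every point $2\pi k$ with $k \in \naturals$, while the summand $w(x)\ee^{-x}$ becomes uniformly small once $x$ is large, because $w$ is bounded. So the key idea is to slide the evaluation window for $\rho$ far out along the positive real axis, where $\rho$ is only a tiny perturbation of a rescaled, shifted copy of $\sin$, for which local linearity is elementary.

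Concretely, fix $R > 0$ with $L \subseteq [-R, R]$. For $h > 0$ and $k \in \naturals$ set $\phi_{h,k}(x) = hx + 2\pi k$ and $\psi_h(x) = x/h$; these are affine, so $\iota_{h,k} = \psi_h \circ \rho \circ \phi_{h,k}$ is representable by a single enhanced neuron, and, using $\sin(hx + 2\pi k) = \sin(hx)$,
\begin{equation*}
\iota_{h,k}(x) = \frac{\sin(hx)}{h} + \frac{w(hx + 2\pi k)\,\ee^{-(hx + 2\pi k)}}{h}.
\end{equation*}
For the first term, the estimate $\abs{\sin t - t} \leq \abs{t}^3/6$ gives $\abs{\sin(hx)/h - x} \leq h^2 R^3/6$ uniformly for $x \in L$, independently of $k$. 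For the second term, boundedness of $w$ gives the uniform bound $\norm{w}_\infty\,\ee^{hR}\,\ee^{-2\pi k}/h$. Then, given $\epsilon > 0$, I would \emph{first} choose $h$ small enough that $h^2 R^3/6 < \epsilon/2$, and \emph{then}, with $h$ now frozen, choose $k$ large enough that $\norm{w}_\infty\,\ee^{hR}\,\ee^{-2\pi k}/h < \epsilon/2$, yielding $\chebnorm{\iota_{h,k} - \iota} \leq \epsilon$ on $L$.

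The only genuine obstacle is the interaction between the two parameters: shrinking $h$ to linearise $\sin$ simultaneously inflates the $1/h$ prefactor multiplying the error contributed by $w$. This is exactly why the order of choices matters — once $h$ is fixed the exponential decay in $k$ defeats the fixed constant $\norm{w}_\infty\,\ee^{hR}/h$. With Lemma~\ref{identity3} established, Proposition~\ref{nondiffthm} would then follow just as Proposition~\ref{mainthm} follows from Lemma~\ref{identity1} together with the Register Model (Proposition~\ref{register-model}), since this $\rho$ is continuous and nonpolynomial.
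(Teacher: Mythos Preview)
Your proposal is correct and is essentially the same as the paper's proof: the paper also takes $\phi_{h,A}(x) = hx + A$ with $A \in 2\pi\naturals$ and $\psi_h(x) = x/h$, and then remarks that taking $h$ small and $A$ large makes $\iota_{h,A}$ uniformly close to $\iota$. Your version is in fact more detailed, supplying the explicit bounds $h^2 R^3/6$ and $\norm{w}_\infty\,\ee^{hR}\ee^{-2\pi k}/h$ and spelling out the order in which $h$ and $k$ must be chosen, where the paper simply says ``it is clear''.
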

\begin{proof}
For $h \in \reals \setminus \{0\}$ and $A \in 2 \pi \naturals$, let $\phi_{h, A}(x) = hx  + A$, and let $\psi(x) = x/h$. Let
\begin{equation*}
\iota_{h, A} = \psi_h \circ \rho \circ \phi_{h, A},
\end{equation*}
which is of the form that an enhanced neuron can represent. Then jointly taking $h$ small enough and $A$ large enough it is clear that $\iota_{h, A}$ may be taken uniformly close to $\iota$ on $L$.
\end{proof}

\nondiffthm
\begin{proof}
As the proof of Proposition \ref{mainthm}, except substituting Lemma \ref{identity3} for Lemma \ref{identity1}.
\end{proof}

Next, we discuss how we may establish a result over a noncompact domain, by exploiting the nice properties of the ReLU.
\relucor
The full proof is deferred to Appendix \ref{relucorproof} due to space, but may be sketched as follows. Given some $f \in L^p(\reals^n; \reals^m)$, choose a compact set $K \subseteq \reals^n$ on which $f$ places most of its mass, and find a neural approximation to $f$ on $K$ in the manner of Proposition \ref{mainthm}. Once this is done, a cut-off function is applied outside the set, so that the network takes the value zero in $\reals^n \setminus K$. The interesting bit is finding a neural representation of such cut-off behaviour.

In particular the usual thing to do -- multiply by a cut-off function -- does not appear to have a suitable neural representation, as merely approximating the multiplication operation is not necessarily enough on an unbounded domain. Instead, the strategy is to 
take a maximum and a minimum with multiples of the cut-off function, which may be performed exactly. 

Moving on, our final result is that Proposition \ref{polythmtwo} may be improved upon, provided the activation function satisfies a certain condition.
\polythm
The proof is similar to that of Proposition \ref{polythmtwo}, so it is deferred to Appendix \ref{polythmproof}. Together with Proposition \ref{mainthm}, this means that `most' activation functions require a width of only $n + m + 1$.

\section{Conclusion}
There is a large literature on theoretical properties of neural networks, but much of it deals only with the ReLU.\footnote{See for example \citet{Hanin2017, philipp1, philipp2, daubechies, understandingrelu, relu1, relu2, relu3, relu4}.} However how to select an activation function remains a poorly understood topic, and many other options have been proposed: leaky ReLU, PReLU, RRelu,  ELU, SELU and other more exotic activation functions as well.\footnote{See \citet{leakyrelu, prelu, rrelu, elu, selu, relusix, activation3, activation2, activation1, padeactivation} respectively.}

Our central contribution is to provide results for universal approximation using general activation functions (Theorem \ref{megathm} and Propositions \ref{mainthm} and \ref{polythmtwo}). In contrast to previous work, these results do not rely on the nice properties of the ReLU, and in particular do not rely on its explicit description. The techniques we use are straightforward, and robust enough to handle even the pathological case of nowhere-differentiable activation functions (Proposition \ref{nondiffthm}).


We also consider approximation in $L^p$ norm (Remark \ref{relucorrem}), and generalise previous work to smaller widths, multiple output neurons, and $p \geq 1$ in place of $p =1$ (Theorem \ref{relucor}).

In contrast to much previous work, every result we show also handles the general case of multiple output neurons.

\acks{This work was supported by the Engineering and Physical Sciences Research Council [EP/L015811/1].}

\bibliography{deep_approx}

\appendix

\section{Proof of the Register Model (Proposition \ref{register-model})}\label{register-model-appendix}

First, we recall the classical Universal Approximation Theorem \citep{pinkus}:
\Cybenko*

The Register Model is created by suitably reorganising the neurons from a collection of such shallow networks.

\registermodel*

\begin{proof}
Fix $f \in C(K; \reals^m)$. Let $f = (f_1, \ldots, f_m)$. Fix $\epsilon > 0$. By Theorem \ref{Cybenko}, there exist single-hidden-layer neural networks $g_1, \ldots, g_m \in \NNn$ with activation function $\rho$ approximating $f_1, \ldots, f_m$ respectively. Each approximation is to within error $\epsilon$ with respect to $\chebnorm{\,\cdot\,}$ on $K$. Let each $g_i$ have $\beta_i$ hidden neurons. Let $\sigma_{i, j}$ represent the operation of its $j$th hidden neuron, for $j \in \{1, \ldots, \beta_i\}$. In keeping with the idea of enhanced neurons, let each $\sigma_{i, j}$ include the affine function that comes after it in the output layer of $g_i$, so that $g_i = \sum_{j = 1}^{\beta_i} \sigma_{i, j}$. Let $M = \sum_{i = 1}^m \beta_i$.

We seek to construct a neural network $N \in \Inm$. Given input $(x_1, \ldots, x_n) \in \reals^n$, it will output $(G_1, \ldots, G_m) \in \reals^m$, such that $G_i = g_i(x_1, \ldots, x_n)$ for each $i$. That is, it will compute all of the shallow networks $g_1, \ldots, g_m$. Thus it will approximate $f$ to within error $\epsilon$ with respect to $\chebnorm{\,\cdot\,}$ on $K$.

The construction of $N$ is mostly easily expressed pictorially; see Figure \ref{table:main}. In each cell, representing a neuron, we define its value as a function of the values of the neurons in the previous layer. In every layer, all but one of the neurons uses the identity activation function $\iota \colon \reals \to \reals$, whilst one neuron in each layer performs a computation of the form $\sigma_{i, j}$.

The construction can be summed up as follows.

Each layer has $n + m + 1$ neurons, arranged into a group of $n$ neurons, a group of a single neuron, and a group of $m$ neurons.

The first $n$ neurons in each layer simply record the input $(x_1, \ldots, x_n)$, by applying an identity activation function. We refer to these as the `in-register neurons'.

Next we consider $g_1, \ldots, g_m$, which are all shallow networks. The neurons in the hidden layers of $g_1, \ldots, g_m$ are arranged `vertically' in our deep network, one in each layer. This is the neuron in each layer that uses the activation function $\rho$. We refer to these as the `computation neurons'. Each computation neuron performs its computation based off of the inputs preserved in the in-register neurons.

The final group of $m$ neurons also use the identity activation function; their affine parts gradually sum up the results of the computation neurons. We refer to these as the `out-register neurons'. The $i$th out-register neuron in each layer will sum up the results of the computation neurons computing $\sigma_{i, j}$ for all $j \in \{1, \ldots, \beta_i\}$.

Finally, the neurons in the output layer of the network are connected to the out-register neurons of the final hidden layer. As each of the neurons in the output layer has, as usual, the identity activation function, they will now have computed the desired results.
\end{proof}
\begin{figure}\centering
\begin{tikzpicture}
\node[rotate=90] at (0, 0)
{\begin{minipage}{0.98\textheight}\fontsize{7pt}{7pt}\selectfont
\begin{tabular}[t]{@{}c@{}c@{}ccc@{}c@{}c@{}c@{}c@{}c@{}ccc}
\llayer($G_1 = \zeta_{1, M}$\vphantom{0pt}, $G_2 = \zeta_{2, M}$, $G_m = \zeta_{m, M} + \tau_{m, M}$)
\toprule
\layer($\gamma_{1, M} = 0$, $\gamma_{2, M} = 0$, $\gamma_{n, M} = 0$, \vv{m}{\beta_m}{M - 1}{M}, $\zeta_{1, M} =$\\$ \iota(\zeta_{1, M - 1})$, $\zeta_{2, M} =$\\$ \iota(\zeta_{2, M - 1})$, $\zeta_{m, M} =$\\$ \iota(\zeta_{m, M - 1} + \tau_{M - 1}{)}$)
\layer(\y{1}{M - 1}{M - 2}, \y{2}{M - 1}{M - 2}, \y{n}{M - 1}{M - 2}, \vv{m}{\beta_m - 1}{M - 2}{M - 1}, $\zeta_{1, M - 1} =$\\$ \iota(\zeta_{1, M - 2})$, $\zeta_{2, {M - 1}} =$\\$ \iota(\zeta_{2, M - 2})$, $\zeta_{m, M - 1} =$\\$ \iota(\zeta_{m, M - 2} + \tau_{M - 2}{)}$)
\multicolumn{13}{c}{\rotatedcdots\vspace{2pt}}\\
%
\toprule
\multicolumn{13}{c}{\rotatedcdots\vspace{2pt}}\\
\toprule
\slayer(\y{1}{\beta_1 + \beta_2}{\beta_1 + \beta_2 - 1}, \y{2}{\beta_1 + \beta_2}{\beta_1 + \beta_2 - 1}, \y{n}{\beta_1 + \beta_2}{\beta_1 + \beta_2 - 1}, \uu{2}{\beta_2}{\beta_1 + \beta_2 - 1}, $\zeta_{1, \beta_1 + \beta_2} =$\\$ \iota(\zeta_{1, \beta_1 + \beta_2 - 1})$, $\zeta_{2, \beta_1 + \beta_2} =$\\$ \iota(\zeta_{2, \beta_1 + \beta_2 - 1} + \tau_{\beta_1 + \beta_2 - 1})$, $\zeta_{m, \beta_1 + \beta_1} = 0$)
\multicolumn{13}{c}{\rotatedcdots\vspace{2pt}}\\
\layer(\y{1}{\beta_1 + 3}{\beta_1 + 2}, \y{2}{\beta_1 + 3}{\beta_1 + 2}, \y{n}{\beta_1 + 3}{\beta_1 + 2}, \uuu{2}{3}{\beta_1 + 2}, $\zeta_{1, \beta_1 + 3} =$\\$ \iota(\zeta_{1, \beta_1 + 2})$, $\zeta_{2, {\beta_1 + 3}} =$\\$ \iota(\zeta_{2, \beta_1 + 2} + \tau_{\beta_1 + 2})$, $\zeta_{m, \beta_1 + 3} = 0$)
\layer(\y{1}{\beta_1 + 2}{\beta_1 + 1}, \y{2}{\beta_1 + 2}{\beta_1 + 1}, \y{n}{\beta_1 + 2}{\beta_1 + 1}, \uuu{2}{2}{\beta_1 + 1}, $\zeta_{1, \beta_1 + 2} =$\\$ \iota(\zeta_{1, \beta_1 + 1})$, $\zeta_{2, {\beta_1 + 2}} = \iota(\tau_{\beta_1 + 1})$, $\zeta_{m, \beta_1 + 2} = 0$)
\layer(\y{1}{\beta_1 + 1}{\beta_1}, \y{2}{\beta_1 + 1}{\beta_1}, \y{n}{\beta_1 + 1}{\beta_1}, \uuu{2}{1}{\beta_1}, $\zeta_{1, \beta_1 + 1} =$\\$ \iota(\zeta_{1, \beta_1} + \tau_{\beta_1})$, $\zeta_{2, \beta_1 + 1} = 0$, $\zeta_{m, \beta_1 + 1} = 0$)
\toprule
\slayer(\y{1}{\beta_1}{\beta_1 - 1}, \y{2}{\beta_1}{\beta_1 - 1}, \y{n}{\beta_1}{\beta_1 - 1}, \sss{1}{\beta_1}{\beta_1 - 1}, $\zeta_{1, \beta_1} =$\\$ \iota(\zeta_{1, \beta_1 - 1} + \tau_{\beta_1 - 1})$, $\zeta_{2, \beta_1} = 0$, $\zeta_{m, \beta_1} = 0$)
\multicolumn{13}{c}{\rotatedcdots\vspace{2pt}}\\
\layer(\y{1}{4}{3}, \y{2}{4}{3}, \y{n}{4}{3}, \sss{1}{4}{3}, $\zeta_{1, 4} =$\\$ \iota(\zeta_{1, 3} + \tau_3)$, $\zeta_{2, 4} = 0$, $\zeta_{m, 4} = 0$)
\layer(\y{1}{3}{2}, \y{2}{3}{2}, \y{n}{3}{2}, \sss{1}{3}{2}, $\zeta_{1, 3} =$\\$ \iota(\zeta_{1, 2} + \tau_2)$, $\zeta_{2, 3} = 0$, $\zeta_{m, 3} = 0$)
\layer(\y{1}{2}{1}, \y{2}{2}{1}, \y{n}{2}{1}, \sss{1}{2}{1}, $\zeta_{1, 2} = \iota(\tau_1)$, $\zeta_{2, 2} = 0$, $\zeta_{m, 2} = 0$)
\layer($\gamma_{1, 1} =$\\$ \iota(x_1)$, $\gamma_{2, 1} =$\\$ \iota(x_2)$, $\gamma_{n, 1} =$\\$ \iota(x_n)$, $\tau_1 = \sigma_{1, 1}(x_1{,} \ldots{,} x_n)$, \vspace{0pt}$\zeta_{1, 1} = 0$, $\zeta_{2,1} = 0$, $\zeta_{m,1}=0$)
\toprule
\flayer($x_1$, $x_2$, $x_n$)
\end{tabular}\normalsize
\caption{The thick lines delimit groups of layers; the $i$th group computes $\sigma_{i, 1}, \ldots, \sigma_{i, \beta_i}$. The inputs to the network are $x_1, \ldots, x_n$, depicted at the bottom. The outputs from the network are $G_1, \ldots, G_m$, depicted at the top. The identity activation function  $\reals \to \reals$ is denoted $\iota$.}\label{table:main}
\end{minipage}};
\end{tikzpicture}
\end{figure}
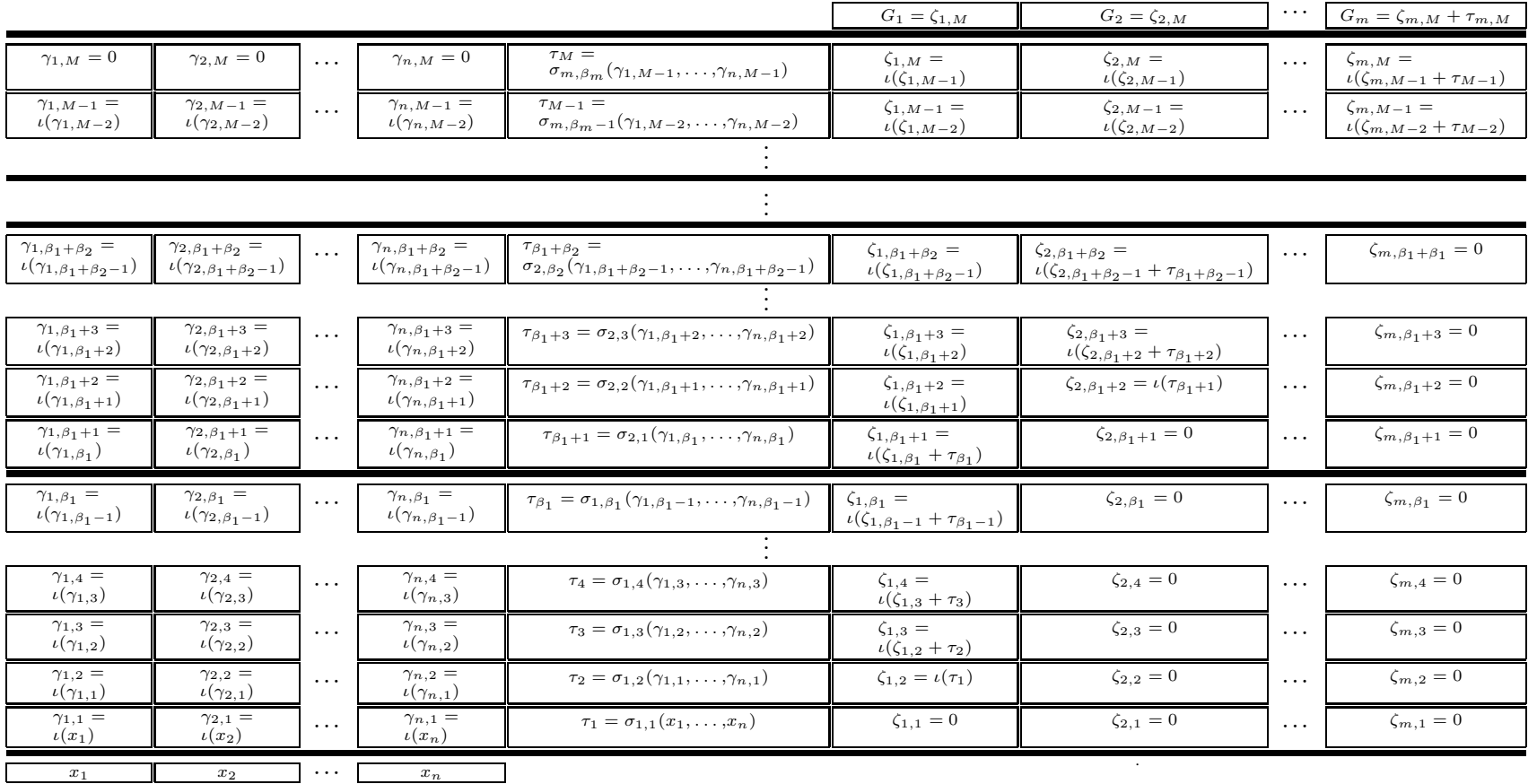
\newpage
\section{Proof of Theorem \ref{relucor}}\label{relucorproof}

\begin{lem}\label{urysohn}
Let $a, b, c, d \in \reals$ be such that $a < b < c < d$. Let $U_{a,b,c,d} \colon \reals \to \reals$ be the unique continuous piecewise affine function which is one on $[b, c]$ and zero on $(-\infty, a] \cup [d, \infty)$. Then two layers of two enhanced neurons each, with ReLU activation function, may exactly represent the function $U_{a,b,c,d}$.
\end{lem}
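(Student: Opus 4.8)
The plan is to write the network down explicitly and verify it by a five‑case computation. The key algebraic fact is that $U_{a,b,c,d}$ is a sum of two ``clipped ramps''. Introduce the affine maps $\ell_1(x) = (x-a)/(b-a)$ and $\ell_2(x) = (d-x)/(d-c)$, and write $\kappa(t) = \max(0,\min(1,t))$ for the clipping of $t$ to $[0,1]$. I claim that
\[
U_{a,b,c,d}(x) = \kappa(\ell_1(x)) + \kappa(\ell_2(x)) - 1 \qquad \text{for every } x \in \reals .
\]
This is checked on the five intervals $(-\infty,a]$, $[a,b]$, $[b,c]$, $[c,d]$, $[d,\infty)$: on the first, $\kappa(\ell_1)=0$ and $\kappa(\ell_2)=1$; on the last, $\kappa(\ell_1)=1$ and $\kappa(\ell_2)=0$; on $[b,c]$ both clips equal $1$; and on $[a,b]$ (respectively $[c,d]$) one clip is the nonconstant linear ramp while the other is still pinned at $1$. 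The ordering $a<b<c<d$ enters precisely here: it makes the two saturation sets $\{\ell_1\geq 1\}=[b,\infty)$ and $\{\ell_2\geq 1\}=(-\infty,c]$ overlap and cover all of $\reals$, which is exactly why only a single constant correction $-1$ is needed.

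Next I realise each $\kappa(\ell_i)$ by a chain of two enhanced ReLU neurons. The first neuron computes $\max(\ell_i(x),0)$, which is a (rescaled) ReLU of an affine function of $x$, hence an enhanced neuron. For the second, observe the exact identity $\min(1,t) = 1 - \max(1-t,\,0)$, valid for all real $t$; the right‑hand side is an affine function composed with a ReLU composed with an affine function, so $t \mapsto \min(1,t)$ is itself representable by one enhanced neuron. Since $\kappa(s) = \min\!\big(1,\max(s,0)\big)$, feeding the output of the first neuron into such a second neuron produces exactly $\kappa(\ell_i(x))$.

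Assembling: the first hidden layer consists of the two neurons $n_1 := \max(\ell_1(x),0)$ and $n_2 := \max(\ell_2(x),0)$; the second hidden layer consists of the two neurons $w_1 := 1 - \max(1-n_1,\,0)$ and $w_2 := 1 - \max(1-n_2,\,0)$, where $w_i$ depends affinely only on the single incoming value $n_i$. By the previous paragraph $w_i = \kappa(\ell_i(x))$, so forming the affine combination $w_1 + w_2 - 1$ — permitted between enhanced neurons — outputs $U_{a,b,c,d}(x)$ exactly, by the displayed identity. I do not expect a real obstacle: every step is an exact ReLU identity rather than an approximation, and the only points requiring care are that the clip‑to‑$[0,1]$ operation genuinely costs a single enhanced neuron (which is what $\min(1,t)=1-(1-t)^+$ secures) and that the two parallel chains respect the width budget of two neurons per layer, which they do.
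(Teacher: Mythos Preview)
Your proof is correct and is essentially the same construction as the paper's: both realise $U_{a,b,c,d}$ as an affine combination of two ``clipped ramps'', each produced by a two-neuron ReLU chain. Indeed, the paper's second-layer neurons $\eta_2=\max\{0,1-\max\{0,(x-a)/(b-a)\}\}$ and $\zeta_2=\max\{0,1-\max\{0,(x-c)/(d-c)\}\}$ satisfy $\eta_2=1-\kappa(\ell_1)$ and $\zeta_2=\kappa(\ell_2)$, so its output $\zeta_2-\eta_2$ is exactly your $\kappa(\ell_1)+\kappa(\ell_2)-1$; the only differences are which affine pieces are absorbed into which enhanced neuron.
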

\begin{proof}
Let $x \in \reals$ be the input. Let $m_1 = 1/(b - a)$. Let $m_2 = 1/(d - c)$. Let $\eta_1, \eta_2$ represent the first neuron in each layer, and $\zeta_1, \zeta_2$ represent the second neuron in each layer. We assign them values as follows.
\begin{align*}
\eta_1 &= \max \{0, m_1 (x - a)\}, & \zeta_1 &= \max \{0, m_2 (x - c)\}, \\
\eta_2 &= \max \{0, 1 - \eta_1 \}, & \zeta_2 &=  \max \{0, 1 - \zeta_1 \}.
\end{align*}
Then $U_{a,b,c,d}(x) = \zeta_2 - \eta_2$.
\end{proof}

\begin{lem}
One layer of two enhanced neurons, with ReLU activation function, may exactly represent the function $(x, y) \mapsto \min \{x, y \}$ on $[0, \infty)^2$.\label{umin}  
\end{lem}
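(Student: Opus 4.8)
The plan is to use the elementary one-sided identity
\begin{equation*}
\min\{x, y\} = x - \max\{0,\, x - y\},
\end{equation*}
which holds for every $(x, y) \in \reals^2$: if $x \le y$ the subtracted term vanishes and the right-hand side equals $x$, whereas if $x > y$ it equals $x - (x - y) = y$. The only feature of the right-hand side that is not immediately a ReLU of an affine function of the inputs is the bare term $x$, and this is precisely where the hypothesis $(x, y) \in [0, \infty)^2$ enters: on the nonnegative orthant one has $x = \max\{0, x\}$, which is itself a single ReLU applied to a linear function of $(x, y)$.

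Concretely, within one layer take two enhanced neurons with ReLU activation $\rho(z) = \max\{0, z\}$, computing
\begin{align*}
\eta &= \max\{0,\, x - y\}, & \zeta &= \max\{0,\, x\}.
\end{align*}
Each is of the admissible form $\psi \circ \rho \circ \phi$ (with $\psi$ the identity and $\phi \colon \reals^2 \to \reals$ linear). For $(x, y) \in [0, \infty)^2$ we have $\zeta = x$, hence $\zeta - \eta = x - \max\{0,\, x - y\} = \min\{x, y\}$. Since forming the affine combination $\zeta - \eta$ is permitted between enhanced neurons, the layer represents $(x, y) \mapsto \min\{x, y\}$ exactly on $[0, \infty)^2$, as required.

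There is essentially no obstacle to this argument; the only point worth flagging is the choice of the asymmetric identity $\min\{x, y\} = x - \max\{0, x - y\}$ over the symmetric $\tfrac12\big(x + y - |x - y|\big)$. The symmetric form would consume both neurons in representing $|x - y|$ as $\max\{0, x - y\} + \max\{0, y - x\}$, leaving no neuron free to recover the $x + y$ term within the same layer; the asymmetric form is what keeps the neuron count down to two.
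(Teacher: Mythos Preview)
Your proof is correct and matches the paper's own argument essentially verbatim: both take $\eta = \max\{0, x-y\}$ and $\zeta = \max\{0, x\}$ and output $\zeta - \eta$. Your added remarks on where the nonnegativity assumption is used and why the asymmetric identity is preferable to $\tfrac12(x+y-|x-y|)$ are accurate elaborations.
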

\begin{proof}
Let the first neuron compute $\eta = \max \{0, x - y\}$. Let the second neuron compute $\zeta = \max \{0, x\}$. Then $\min\{x,y\} = \zeta-\eta$.
\end{proof}

\relucor*
\begin{proof}
Let $f \in L^p(\reals^n; \reals^m)$ and $\epsilon > 0$. For simplicity assume that $\reals^m$ is endowed with the $\chebnorm{\,\cdot\,}$ norm; other norms are of course equivalent. Let $\widehat{f} = (\widehat{f}_1, \ldots, \widehat{f}_m) \in C_c(\reals^n; \reals^m)$ be such that
\begin{equation}\label{eq:hatfdef}
\norm{f - \widehat{f}}_p < \epsilon / 3.
\end{equation}
Let
\begin{equation}\label{eq:cmax}
C = \sup_{x \in \reals^n} \max_i \widehat{f}_i(x) + 1
\end{equation}
and
\begin{equation}\label{eq:cmin}
c = \inf_{x \in \reals^n} \min_i \widehat{f}_i(x) - 1
\end{equation}

Pick $a_1, b_1, \ldots, a_n, b_n \in \reals$ such that $J$ defined by
\begin{equation*}
J = [a_1, b_1] \times \cdots \times [a_n, b_n]
\end{equation*}
is such that $\supp \widehat{f} \subseteq J$. Furthermore, for $\delta > 0$ that we shall fix in a moment, let
\begin{align*}
A_i &= a_i - \delta, \\
B_i &= b_i + \delta,
\end{align*}
and let $K$ be defined by
\begin{equation*}
K = [A_1, B_1] \times \cdots \times [A_n, B_n].
\end{equation*}
Fix $\delta$ small enough that
\begin{equation}\label{eq:complicated}
\abs{K \setminus J}^{1/p} \cdot \max\left\{\abs{C}, \abs{c}\right\} < \frac{\epsilon}{6}.
\end{equation}

Let $g  = (g_1, \ldots, g_m) \in \NN{n + m + 1}$ be such that
\begin{equation}\label{eq:defg}
\sup_{x \in K}\abs{\widehat{f}(x) - g(x)} < \min\left\{\frac{\epsilon}{3 \abs{J}^{1/p}},\,1\right\},
\end{equation}
which exists by Proposition \ref{mainthm}. Note that $g$ is defined on all of $\reals^n$; it simply happens to be close to $\widehat{f}$ on $K$. In particular it will takes values close to zero on $K \setminus J$, and may take arbitrary values in $\reals^n \setminus K$. By equations (\ref{eq:cmax}), (\ref{eq:cmin}), (\ref{eq:defg}), it is the case that
\begin{align}
C \geq \sup_{x \in K} \max_i g_i(x),\nonumber\\
c \leq \inf_{x \in K} \min_i g_i(x).\label{eq:cminmax}
\end{align}

Now consider the network describing $g$; we will modify it slightly. The goal is to create a network which takes value $g$ on $J$, zero in $\reals^n \setminus K$, and moves between these values in the interface region $K \setminus J$. Such a network will provide a suitable approximation to $\widehat{f}$.

This will be done by first constructing a function which is approximately the indicator function for $J$, with support in $K$; call such a function $U$. The idea then is to construct a neural representation of $G_i$ defined by
\begin{equation*}
G_i = \min\{\max\{g_i, cU\}, CU\}.
\end{equation*}
Provided $\abs{K \setminus J}$ is small enough then $G = (G_1, \ldots, G_m)$ will be the desired approximation; this is proved this below.

We move on to presenting the neural representation of this construction.

First we observe that because the activation function is the ReLU, then the identity approximations used in the proof of Proposition \ref{mainthm} may in fact exactly represent the identity function on some compact set: $x \mapsto \max\{0, x+N\} - N$ is exactly the identity function, for suitably large $N$, and is of the form that an enhanced neuron may represent. This observation isn't strictly necessary for the proof, but it does simplify the presentation somewhat, as the values preserved in the in-register neurons of $g$ are now exactly the inputs $x = (x_1, \ldots, x_n)$ for $x \in K$. For sufficiently negative $x_i$, outside of $K$, they will take the value $-N$ instead, but by insisting that is $N$ sufficiently large that
\begin{equation}\label{eq:N_below_A}
-N < \min_{i \in \{1, \ldots, n\}} A_i,
\end{equation}
then this will not be an issue for the proof.

So take the network representing $g$, and remove the output layer. (If the output layer is performing any affine transformations then treat them as being part of the final hidden layer, in the manner of enhanced neurons. Thus the output layer that is being removed is just applying the identity function to the out-register neurons.) Some more hidden layers will be placed on top, and then a new output layer will be placed on top. In the following description, all neurons not otherwise specified will be performing the identity function, so as to preserve the values of the corresponding neurons in the preceding layer. As all functions involved are continuous and $K$ is compact, and compactness is preserved by continuous functions, and continuous functions are bounded on compact sets, then this is possible for all $x \in K$ by taking $N$ large enough.

The first task is to modify the value stored in the $x_1$-in-register neuron. At present it stores the value $x_1$; by using the $x_1$-in-register neuron and the computation neuron in two extra layers, its value may be replaced with $U_{A_1,a_1,b_1,B_1}(x_1)$, via Lemma \ref{urysohn}. Place another two layers on top, and use them to replace the value of $x_2$ in the $x_2$-in-register neuron with $U_{A_2,a_2,b_2,B_2}(x_2)$, and so on. The in-register neurons now store the values
\begin{equation*}
U_{A_1,a_1,b_1,B_1}(x_1), \ldots, U_{A_n,a_n,b_n,B_n}(x_n).
\end{equation*}

Once this is complete, place another layer on top and use the first $x_1$-in-register neuron and the $x_2$-in-register neuron to compute the minimum of their values, in the manner of Lemma \ref{umin}, thus computing
\begin{equation*}
\min\{U_{A_1,a_1,b_1,B_1}(x_1), U_{A_2,a_2,b_2,B_2}(x_2)\}.
\end{equation*}
Place another layer on top and use another two in-register neurons to compute the minimum of this value and the value presently stored in the $x_3$-in-register neuron, that is $U_{A_3,a_3,b_3,B_3}(x_3)$, so that
\begin{equation*}
\min\{U_{A_1,a_1,b_1,B_1}(x_1), U_{A_2,a_2,b_2,B_2}(x_2), U_{A_3,a_3,b_3,B_3}(x_3)\}
\end{equation*}
has now been computed. Continue to repeat this process until the in-register neurons have computed\footnote{It doesn't matter which of the in-register neurons records the value of $U$.}
\begin{equation*}
U = \min_{i \in \{1, \ldots, n\}} U_{A_i,a_i,b_i,B_i}(x_i).
\end{equation*}
Observe how $U$ represents an approximation to the indicator function for $J$, with support in $K$, evaluated at $(x_1, \ldots, x_n)$.

(Note how the small foible regarding how an in-register neuron would only record $-N$ instead of $x_i$, for $x_i < -N$, is not an issue. This is because of equation (\ref{eq:N_below_A}), which implies that $U_{A_i, a_i, b_i, B_i}(x_i) = 0 = U_{A_i, a_i, b_i, B_i}(-N)$, thus leaving the value of $U$ unaffected.)

This is a highly destructive set of operations: the network no longer remembers the values of its inputs. Thankfully, it no longer needs them.

The out-register neurons presently store the values $g_1, \ldots, g_m$, where $g_i = g_i(x_1, \ldots, x_n)$. Now add another layer. Let the value of its out-register neurons be $\theta_1, \ldots, \theta_m$, where
\begin{equation*}
\theta_i = \max\{0, g_i - c U \}.
\end{equation*}
Add one more hidden layer. Let the value of its out-register neurons be $\lambda_1, \ldots, \lambda_m$, where
\begin{equation*}
\lambda_i = \max\{0, -\theta_i + (C - c) U\}.
\end{equation*}
Finally place the output layer on top. Let the value of its neurons be $G_1, \ldots, G_m$, where
\begin{equation*}
G_i = -\lambda_i + C U.
\end{equation*}
Then in fact
\begin{equation}\label{eq:Gfromg}
G_i = \min\{\max\{g_i, cU\}, CU\}
\end{equation}
as desired.

All that remains to show is that $G = (G_1, \ldots, G_m)$ of this form is indeed a suitable approximation. First, as $G$ and $g$ coincide in $J$, and by equation (\ref{eq:defg}),
\begin{align}
\left(\int_J \abs{\widehat{f}(x) - G(x)}^p \dee x \right)^{1/p} &\leq \abs{J}^{1/p} \sup_{x \in J}\abs{\widehat{f}(x) - G(x)} \nonumber\\
&= \abs{J}^{1/p} \sup_{x \in J}\abs{\widehat{f}(x) - g(x)}\nonumber\\
&< \frac{\epsilon}{3}.\label{eq:diffone}
\end{align}
Secondly, by equations (\ref{eq:cmax}), (\ref{eq:cmin}), (\ref{eq:cminmax}), (\ref{eq:Gfromg}) and then equation (\ref{eq:complicated}),
\begin{align}
\left(\int_{K \setminus J} \abs{\widehat{f}(x) - G(x)}^p \dee x \right)^{1/p} &\leq \abs{K \setminus J}^{1/p} \sup_{x \in K \setminus J}\abs{\widehat{f}(x) - G(x)} \nonumber\\
&\leq \abs{K \setminus J}^{1/p} \cdot 2\max\left\{\abs{C}, \abs{c}\right\}\nonumber\\
&< \frac{\epsilon}{3}.\label{eq:difftwo}
\end{align}
Thirdly,
\begin{align}
\left(\int_{\reals^n \setminus K} \abs{\widehat{f}(x) - G(x)}^p \dee x \right)^{1/p} = 0, \label{eq:diffthree}
\end{align}
as both $\widehat{f}$ and $G$ have support in $K$.

So by equations (\ref{eq:hatfdef}), (\ref{eq:diffone}), (\ref{eq:difftwo}) and (\ref{eq:diffthree}),
\begin{equation*}
\norm{f - G}_p \leq \norm{f - \widehat{f}}_p + \norm{\widehat{f} - G}_p < \epsilon
\end{equation*}
\end{proof}

\section{Proof of Proposition \ref{polythm}}\label{polythmproof}
\polythm*
\begin{proof}
Let $h \in \reals \setminus \{0\}$. Define $\rho_h \colon \reals \to \reals$ by
\begin{equation*}
\rho_h(x) = \frac{\rho(\alpha + hx) - \rho(\alpha)}{h^2 \rho''(\alpha)/2}.
\end{equation*}
Then, taking a Taylor explansion around $\alpha$,
\begin{align*}
\rho_h(x) &= \frac{\rho(\alpha)  + hx\rho'(\alpha) +h^2x^2\rho''(\alpha)/2 + \bigO(h^3 x^3) - \rho(\alpha)}{h^2 \rho''(\alpha)/2}\\
&= x^2 + \bigO (h x^3).
\end{align*}
Let $s(x) = x^2$. Then $\rho_h \to s$ uniformly over any compact set as $h \to 0$.

Now set up a network as in the Square Model (Proposition \ref{squareprop}), with every neuron using the square activation function. Call this network $N$. Create a network $N_h$ by copying $N$ and giving every neuron in the network the activation function $\rho_h$ instead.

Uniform continuity preserves uniform convergence, compactness is preserved by continuous functions, and a composition of two uniformly convergent sequences of functions with uniformly continuous limits is again uniformly convergent. 
Thus the difference between $N$ and $N_h$, with respect to $\chebnorm{\,\cdot\,}$ on $K$, may be taken arbitrarily small by taking $h$ arbitrarily small. 

Furthermore note that $\rho_h$ is just $\rho$ pre- and post-composed with affine functions. (Note that there is only one term in the definition of $\rho_h(x)$ which depends on $x$.) This means that any network which may be represented with activation function $\rho_h$ may be precisely represented with activation function $\rho$, by combining the affine transformations involved.
\end{proof}

\end{document}